\def\x{{\mathbf x}}
\def\y{{\boldsymbol y}}
\def\y{{\mathbf y}}
\def\X{{\boldsymbol X}}
\def\Xc{{\boldsymbol {\mathcal X}}}
\newcommand{\be}{\begin{eqnarray}}
\newcommand{\ee}{\end{eqnarray}}
\newcommand{\bi}{\begin{enumerate}}
\newcommand{\ei}{\end{enumerate}}
\useunder{\uline}{\ul}{}
\definecolor{markChanges}{rgb}{1,0,1}
\renewcommand{\arraystretch}{1.3}
\theoremstyle{plain}
\newtheorem{theorem}{Theorem}[section]
\newtheorem{proposition}[theorem]{Proposition}
\theoremstyle{definition}
\theoremstyle{remark}
\newcommand{\method}{FESTA}
\newcommand{\methodexpbold}{\textbf{F}unctionally \textbf{E}quivalent \textbf{S}ampling for \textbf{T}rust \textbf{A}ssessment}
\newcommand{\blue}[1]{\textcolor{black}{#1}}
\title{FESTA: Functionally Equivalent Sampling for Trust Assessment of Multimodal LLMs}
\author{Debarpan Bhattacharya\textsuperscript{*} \\
  Indian Institute of Science \\
  Bangalore, India \\
  \texttt{debarpanb@iisc.ac.in} \\\And
  Apoorva Kulkarni\textsuperscript{*} \\
  University of Maryland \\
  College Park, USA \\
  \texttt{apoorvak@umd.edu} \\\And
  Sriram Ganapathy \\
  Indian Institute of Science \\
  Bangalore, India \\
  \texttt{sriramg@iisc.ac.in}
\\
  }
\begin{document}
\maketitle
\begin{abstract}
The accurate trust assessment of multimodal large language models (MLLMs) generated predictions, which can enable selective prediction and improve user confidence, 
  is challenging due to the diverse multi-modal input paradigms.
  We propose \methodexpbold{}  (\method), a multimodal input sampling technique for MLLMs, that generates an uncertainty measure based on the
    equivalent and complementary input samplings. 
  The proposed task-preserving sampling approach for uncertainty quantification expands the input space to probe the consistency (through equivalent samples) and sensitivity (through complementary samples) of the model.
  FESTA uses only input-output access of the model (black-box), and does not require ground truth (unsupervised).
  The experiments are conducted with various off-the-shelf multi-modal LLMs, on both visual and audio reasoning tasks. The proposed \method{} uncertainty estimate achieves significant improvement
  ($33.3\%$ relative improvement for vision-LLMs and $29.6\%$ relative improvement for audio-LLMs) in selective prediction performance, based on area-under-receiver-operating-characteristic curve
(AUROC) metric in detecting mispredictions. The code implementation is open-sourced~\footnote{\noindent\href{https://github.com/iiscleap/multimodal-llm-uncertainty-estimation}{https://github.com/iiscleap/mllm-uncertainty-estimation}\\\text{\,\,\,\,\,\,\,\,*} Equal contribution.\\\text{\,\,\,\,\,\,\,\, \textit{Accepted in the Findings of EMNLP, 2025}}.}.
\end{abstract}

\section{Introduction}

Large language models (LLMs) have achieved remarkable performance across a wide array of natural language processing tasks \citep{brown2020gpt3, touvron2023llama, openai2023gpt4}. \blue{However, their performance in integration with other modalities- called, multimodal LLMs (MLLMs), is often inferior compared to text-only LLMs~\cite{li2024multi, fu2024blink}.}
\subsection{Selective prediction}
\blue{The works on selective prediction (SP) propose to prevent a model from answering in highly uncertain settings, thereby abstaining from potentially  wrong predictions (abstention)~\cite{wen2025know, madhusudhan2025llms}. Selective prediction is highly desirable for building safe AI deployments~\cite{amodei2016concrete, hendrycks2021unsolved} in: \\
\textbf{(a)} Tasks where a model offers low accuracy.\\
\textbf{(b)} Safety-critical scenarios like finance, medicine and autonomous driving, where incorrect predictions are very expensive.}\\
An efficient SP algorithm offers \textit{low selective risk} (high accuracy in the subset of questions it chooses to answer) in deployment.
\subsection{Selective prediction for multimodal LLMs}
\blue{One of the most common approaches to SP for LLMs is based on quantifying the variance in the output in the form of an entropy measure~\cite{kuhn2023semantic, farquhar2024detecting, ling2024uncertainty}. The model predictions with high uncertainty (and entropy) are more prone to errors, and hence are abstained from prediction. However, in some cases,  inaccurate predictions by MLLMs may arise from their insensitivity to the input (referred to as mode collapse). Such mis-predictions have low predictive entropy, making the entropy an unreliable measure of prediction accuracy. Other works also find LLMs to generate low-uncertainty hallucinations~\cite{simhi2025trust}. In many of these works, the log-probability has been the most common means to obtain output confidence in traditional neural networks. However, for MLLMs,\\
\textbf{(a)} Log-probability might not be accessible for closed-source models.\\
\textbf{(b)} The calibration of log-probability may be upset during the instruction tuning phase~\cite{tian2023just}.\\
\textbf{(c)} Calibration of log-probability deteriorates as model performance degrades~\cite{guo2017calibration, wang2023llmjudge, kurbis2024uncertainty}.\\
Hence, the need for uncertainty and confidence estimation for MLLMs, in black-box settings and low-performing tasks, is paramount.}
\blue{Recently published works~\cite{fu2024blink, bhattacharya25b_interspeech} report that MLLMs are astonishingly poor in simple multimodal reasoning tasks. Although this makes multimodal reasoning a suitable application for evaluating abstention algorithms, computation of predictive uncertainty of MLLMs remains challenging~\citep{hendrycks2022scaling}. 
}
\begin{figure}[t!]
    \centering
    \includegraphics[width=0.9\linewidth, height=0.6\textwidth]{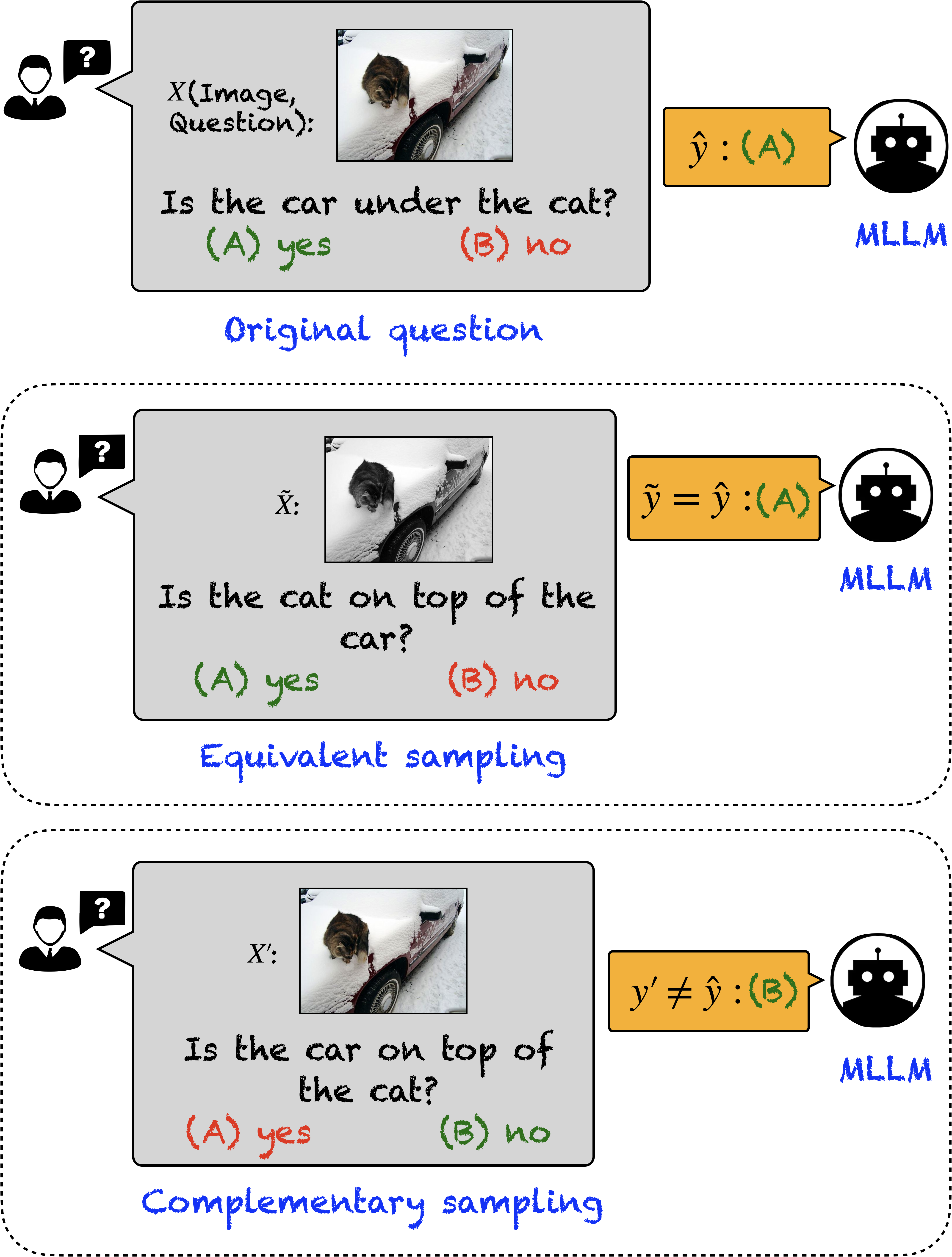}
    \caption{\blue{An example of multi-modal reasoning input (top-panel).   An equivalent  sample (middle panel) with same gray-scale image and rephrased prompt question is expected to keep the MLLM prediction unchanged, whereas a complementary input sample (bottom panel) is expected to alter the prediction. The proposed \method{} uses equivalent and complementary samples to generate the uncertainty measure.}
    }
    \vspace{-0.2in}
    \label{fig:summary}
\end{figure}
\begin{figure*}[t]
    \centering
    \includegraphics[width=0.95\linewidth, height=0.37\textwidth]{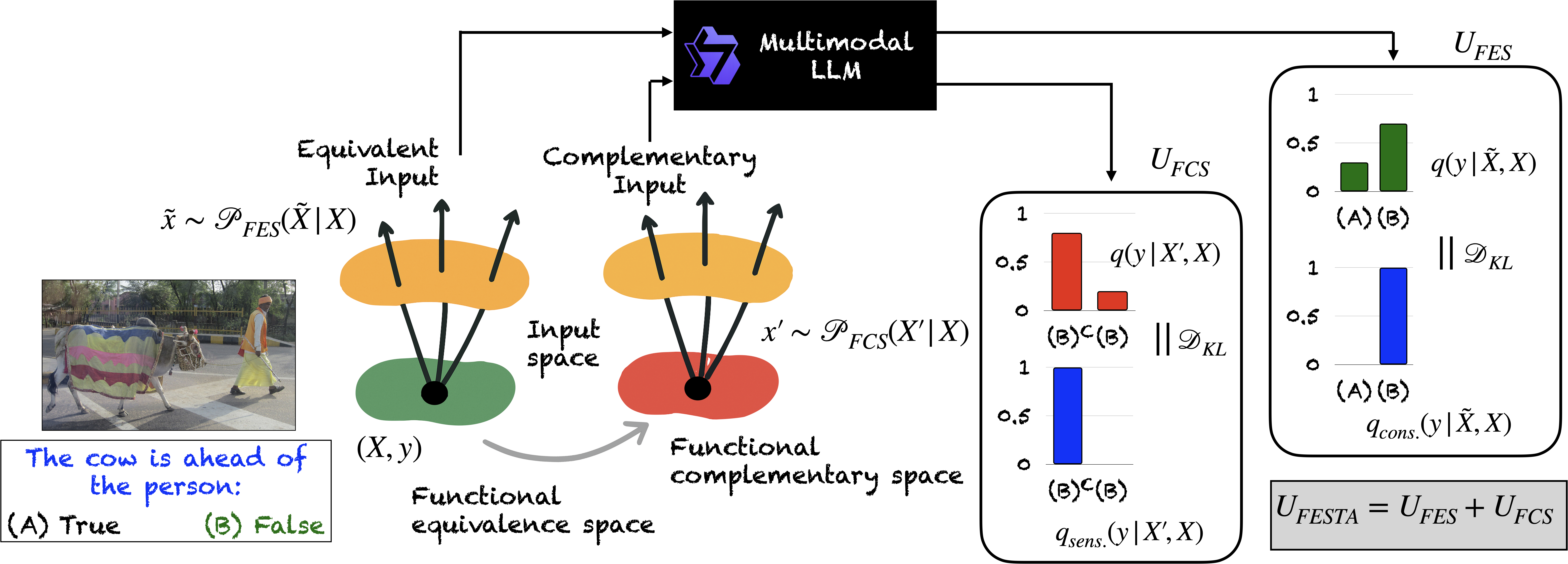}
    \caption{Schematic illustration of the proposed FESTA uncertainty quantification approach. Given a multimodal MCQ  input, we  generate functional equivalent samples (FES) and functional complementary samples (FCS). We compute divergence of model predictive uncertainty from an ideally consistent model (for FES) and an ideally sensitive model (for FCS), and then combine these measures to generate the \method{} uncertainty score.
    }
    \vspace{-0.2in}
    \label{fig:schematic}
\end{figure*}
\subsection{Contribution}
Based on the above discussion, and with the advent of multimodal LLMs such as large vision language models (LVLM)~\cite{liu2023visual, agrawal2024pixtral, bai2025qwen2} and large audio language models (LALM)~\cite{chu2024qwen2,tang2023salmonn}, we make the following observations:\\
\textbf{(O1)} The log-likelihood based quantification of model confidence is inapt for MLLMs because of its unavailability in black-box settings and its upset calibration during instruction tuning.\\
\textbf{(O2)} The existing output entropy-based uncertainty measures for black-box models fail in the case of low-entropy hallucinations.\\
\textbf{(O3)} Abstention performance of such entropy-based prior works also degrades in low accuracy tasks like reasoning.\\

\blue{In this paper, we propose \textit{functional equivalence sampling (FES)} and \textit{functional complementary sampling (FCS)} for abstaining from predictions in multimodal LLMs. FES samples are equivalent to the original input for the given task, and hence, ideally should not change the model prediction from the original input. FCS samples are complementary samples for the same task that are expected to alter the model prediction.
The examples are illustrated in Figure~\ref{fig:summary}. The MLLM predictions for FES and FCS samples are used to compute the FESTA uncertainty, which acts as an indicator of mis-prediction.}
We restrict this work to multiple-choice reasoning involving audio/visual prompts.
The key contributions are:
\begin{itemize}[leftmargin=*, itemsep=0pt, topsep=0pt]
    \item We propose equivalent (FES) and complementary (FCS) input samplings to identify the consistency and sensitivity of model outputs. They contribute to obtain the FESTA uncertainty score that is used to abstain from MLLM mispredictions in unsupervised and black-box settings.
    \item We quantify \method{} score as a KL-divergence from an ideally consistent and sensitive model, that improves over the  entropy-based measures.
    \item FESTA successfully mitigates low uncertainty hallucinations, where the other baselines fail.
    \item Extensive benchmark comparisons with various other prior works on audio/visual reasoning tasks and multiple open-source MLLMs.
\end{itemize}
Figure~\ref{fig:schematic} illustrates the working of \method{}.
\section{Problem statement}
While \method{} is applicable to general multi-modal outputs, we make a few assumptions and restrictions. We restrict this paper to the purely textual output case, in a multi-choice question-answering (MCQA) setting.
Further, it is also assumed that the MLLMs under study are instruction-tuned well enough to follow instructions to generate single-token MCQA outputs. \blue{Now, we define the problem statement, followed by formulating the FESTA algorithm. A summary of the notations used is present in Appendix Table~\ref{tab:notations-table}
}.\\
Let $\X = [\mathbf{\X_O}, \mathbf{\X_T}]\in  \Xc$ denote a multi-modal input instance (e.g., an audio/image + textual prompt) to an MLLM with ground-truth response $\y_\texttt{target}$, and let $\mathcal{S}$ denote the finite set of possible MCQA outputs.
The MLLM defines a predictive distribution over a fixed set of outputs:
$q(\y|\X) \in \Delta^{|\mathcal{S}|}$.
The model's prediction is (greedy sampling):
\[
\hat{\textbf{y}} := \arg\max_{\textbf{y} \in \mathcal{S}} q(\textbf{y}|\X).
\]
\begin{tcolorbox}[colback=gray!10!white, colframe=gray!70!black, boxrule=0.4pt, arc=2pt]
\textbf{Problem statement:} Given multi-modal input $\X$ and model output $\hat{\mathbf{y}}$ generated from inherent predictive distribution $q(\y|\X)$, estimate the predictive uncertainty of $\hat{\mathbf{y}}$ without access to $q(\y|\X)$ in a black-box setting.
\end{tcolorbox}
We resort to the following directions to develop the uncertainty estimator,
\begin{itemize}[leftmargin=*, itemsep=0pt, topsep=0pt]
    \item Sampling FES to estimate the epistemic uncertainty (consistency).
    \item Sampling FCS to measure counterfactual uncertainty (sensitivity).
    \item Combining the two measures to compute the \method{} uncertainty score.
\end{itemize}

\section{\method{} Uncertainty Estimator}
\label{sec:method}
\blue{In this section, we describe the sampling processes, and the computation of FESTA uncertainty score.}
\subsection{Functional equivalent samples (FES)}
\label{subsec:methods-fes}
 Given an input–output pair $(\X, \y_\texttt{target})$, let $T(\cdot)$ denote the  task that the model must solve to generate the correct response, and let $\textbf{M}_{\texttt{ideal}}$ denote the hypothetical model which has the ideal behavior.
\begin{tcolorbox}[colback=gray!10!white, colframe=gray!70!black, boxrule=0.4pt, arc=2pt]
\textbf{Def: }
A transformation $\tilde{\mathbf{X}} = \mathcal{E}(\mathbf{X})$ is  said as a \emph{functionally equivalent sample} of $\mathbf{X}$ if:
\[
T(\tilde{\mathbf{X}}) = T(\mathbf{X}) \, \text{and} \,M_{\texttt{ideal}}(\tilde{\mathbf{X}}) = M_{\texttt{ideal}}(\mathbf{X})
\]
\end{tcolorbox}
We define a distribution $\mathcal{P}_{\mathrm{FES}}(\tilde{\mathbf{X}}|\mathbf{X})$ over all possible equivalent transformations ($\{\mathcal{E}_i(\mathbf{X})\}_i$) and denote the sampling process as:
\[
\tilde{\mathbf{X}} \sim \mathcal{P}_{\mathrm{FES}}(\tilde{\mathbf{X}}|\mathbf{X}) \, 
\text{or} \, \tilde{\mathbf{X}}\sim_{\mathcal{E}} \mathbf{X}
\]
\blue{For example, in Figure~\ref{fig:summary}, the task a model must perform to answer $\mathbf{X}$ is spatial reasoning involving the objects - car and the cat. An equivalent sample (FES), $\tilde{\mathbf{X}}$  is a transformation of the input $\mathbf{X}$ (image, question) so that the spatial relationship of the cat and the car are equivalent to the one in the original image, or the rephrased prompt preserves the query part of the original question semantically. Thus, the FES   transformed image and question remain consistent with the original input. An example FES sample ($\tilde{\mathbf{X}}$) in Figure~\ref{fig:summary} involves the original image and the paraphrased question. In an ideal setting, the output of the MLLM  should be unaltered. In an analogous fashion, the audio version of FES, for different temporal reasoning tasks, are shown in Figure~\ref{fig:fes_fcs_examples}.in Appendix.}
Also, the FES samples hold formal equivalence among them (Appendix~\ref{sec:equivalence-proof-fes}).
\subsection{Functional complementary samples (FCS)}
\label{subsec:methods-fcs}
\begin{tcolorbox}[colback=gray!10!white, colframe=gray!70!black, boxrule=0.4pt, arc=2pt]
\textbf{Def:}
A transformation ${\mathbf{X}'} = \mathcal{C}(\mathbf{X})$ is a \emph{functionally complementary sample} of $\mathbf{X}$ if: 
\[
T({\mathbf{X}'}) = T(\mathbf{X}) \, \text{and} \,M_{\texttt{ideal}}({\mathbf{X}'}) \neq M_{\texttt{ideal}}(\mathbf{X})
\]
\end{tcolorbox}
So, an FCS sample is task-equivalent but functionally divergent. We define a distribution $\mathcal{P}_{\mathrm{FCS}}({\mathbf{X}'}|\mathbf{X})$ over all complementary transformations ($\{\mathcal{C}_i(\mathbf{X})\}_i$).
We sample $\mathbf{X}'$ as,
\[
{\mathbf{X}'} \sim \mathcal{P}_{\mathrm{FCS}}({\mathbf{X}'}|\mathbf{X}) \, \text{or} \, \mathbf{X}'\sim_{\mathcal{C}} \mathbf{X}
\]
\blue{For example, in Figure~\ref{fig:summary}, the complementary sample ${\mathbf{X}'}$ is a transformation of the image that doesn't alter the spatial relationship between the cat and the car, and the question is semantically reversed. The combination ensures that a robust model  will modify its response with respect to the original prediction. Alternatively, the image can be transformed complementarily as well. In an analogous fashion, the FCS perturbations of the audio examples are shown in Section~\ref{sec:fes-fcs-details}. Note that, negation words like ``not'' are avoided while generating complementary questions to avoid  model sensitivity to negation words~\cite{truong2023language}.}
Although a complementary sample is not equivalent to the original input,  it can be shown that all complementary samples $\mathbf{X}'$ have formal equivalence among them (Appendix~\ref{sec:equivalence-proof-fcs}). \blue{Further details on the generation of FES and FCS samples are discussed in Sections~\ref{subsec:fes-generation} and~\ref{subsec:fcs-generation}}.
\subsection{Ideal behavior under FES and FCS}
 Before uncertainty quantification, we first introduce the notion of a consistent and sensitive model, $M_{\texttt{cons.}}$ and $M_{\texttt{sens.}}$, respectively.  Their definitions do not rely on the ground truth $\y_\texttt{target}$, making the approach fully unsupervised. They have a subset of the properties that $M_{\texttt{ideal}}$ has.
\begin{itemize}[leftmargin=*, itemsep=0pt, topsep=0pt]
    \item \textcolor{black}{\textbf{Consistency under FES}: The model $M_{\texttt{cons.}}$ generates predictions that remain consistent and unaltered with respect to the original prediction, i.e., ($\y = \hat{\y}$), under FES.}
    \item \textcolor{black}{\textbf{Sensitivity to FCS}: The  model $M_{\texttt{sens.}}$ is sensitive to counterfactual negations and its predictions for FCS are complementary to the original predictions, i.e., ($\y\neq \hat{\y}$). 
    }
\end{itemize}
We later define the consistency entropy ($U_{\texttt{FES}}$) and sensitivity entropy ($U_{\texttt{FCS}}$), based on the deviation of a given model $M$ from $M_{\texttt{cons.}}$ and $M_{\texttt{sens.}}$, respectively.
Note that, an ideal model $M_{\texttt{ideal}}$ satisfies properties of both $M_{\texttt{cons.}}$ and $M_{\texttt{sens.}}$ ($M_{\texttt{ideal}} \subset M_{\texttt{cons.}}$, $M_{\texttt{ideal}} \subset M_{\texttt{sens.}}$). $M_{\texttt{ideal}}$ is both consistent and sensitive, while the converse is false. 
\subsection{Uncertainty Estimation from FES}
For an input $\X = \x$, denote its FES samples as: $\{ \tilde{\x}_1, \tilde{\x}_2, \dots, \tilde{\x}_{K_1} \}$.
Let $q(\y|\tilde{\x}_k)$ be the predictive distribution of model $M$ for $\tilde{\x}_k$ and $\y$ is the output random variable sampled using the stochastic decoding of the MLLMs.
The uncertainty measure of the model $M$ is measured as the deviation from the grounded model $M_{\texttt{cons.}}$.\\
\begin{itemize}[leftmargin=*, itemsep=0pt, topsep=0pt]
    \item \textbf{Predictive distribution of $M_{\texttt{cons.}}$}: The definition of $M_{\text{cons.}}$ implies that its predictive distribution is a Kronecker delta function with respect to equivalent sampling. The entire weight of the delta is concentrated at the greedy output $\hat{\y}$.
    \begin{align*}
    q_{\texttt{cons.}}(\y \mid \tilde{\x}_k, \x) &= \delta_{\y\hat{\y}} \quad \forall\, \tilde{\x}_k \sim_{\mathcal{E}} \mathcal{P}_{\mathrm{FES}},\; \y \in \mathcal{S} 
    \end{align*}
    \item \textbf{Predictive distribution of $M$}: This is given as:
    \begin{align*}
q_{FES}(\y \mid \x) &= \mathbb{E}_{\tilde{\x}_k \sim \mathcal{P}_{\mathrm{FES}}} \, q(\y \mid \tilde{\x}_k, \x) \, , \y \in \mathcal{S}
\end{align*}
\end{itemize}
Now, we pose the uncertainty of model $M$ as the deviation of its predictive distribution from that of $M_{\texttt{cons.}}$, i.e.,
\[
U_{FES}(M|\x) = D_{KL}(q_{\texttt{cons.}}(\y \mid \x) || q_{FES}(\y \mid \x))
\]
\begin{proposition}
The $U_{FES}(M|\x)$ simplifies to:
\[
U_{\text{FES}}(M \mid \mathbf{x}) := -\log q_{\text{FES}}(\y = \hat{\y} \mid \mathbf{x}), \y\in \mathcal{S}
\]
\end{proposition}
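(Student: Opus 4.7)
The plan is to unpack the KL divergence directly; the heavy lifting has already been done by the definition of $q_{\texttt{cons.}}$ as a Kronecker-delta distribution, so the computation is essentially the standard collapse of a delta-versus-distribution KL into a negative log-likelihood.

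First, I would observe that marginalising $q_{\texttt{cons.}}(\y \mid \tilde{\x}_k, \x) = \delta_{\y\hat{\y}}$ over the FES sampling distribution yields the same delta, because the right-hand side is independent of $\tilde{\x}_k$:
\[
q_{\texttt{cons.}}(\y \mid \x) \;=\; \mathbb{E}_{\tilde{\x}_k \sim \mathcal{P}_{\mathrm{FES}}} \, \delta_{\y\hat{\y}} \;=\; \delta_{\y\hat{\y}}.
\]
This mirrors the way $q_{\text{FES}}(\y\mid\x)$ was defined, so the two distributions sit on the same finite outcome set $\mathcal{S}$ and can be compared term by term.

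Next I would write the KL divergence as a finite sum over $\mathcal{S}$:
\[
D_{KL}\bigl(q_{\texttt{cons.}}(\y\mid\x) \,\|\, q_{\text{FES}}(\y\mid\x)\bigr) \;=\; \sum_{\y \in \mathcal{S}} q_{\texttt{cons.}}(\y \mid \x) \, \log \frac{q_{\texttt{cons.}}(\y \mid \x)}{q_{\text{FES}}(\y \mid \x)}.
\]
Adopting the standard convention $0\log 0 = 0$, every term with $\y \neq \hat{\y}$ vanishes because $q_{\texttt{cons.}}(\y \mid \x) = 0$ there. The lone surviving term sits at $\y = \hat{\y}$, where $q_{\texttt{cons.}}(\hat{\y}\mid \x) = 1$; substituting yields $1\cdot \log\bigl(1/q_{\text{FES}}(\hat{\y}\mid \x)\bigr) = -\log q_{\text{FES}}(\hat{\y}\mid \x)$, which is exactly the claimed identity.

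The only subtlety, and thus the main (though minor) obstacle, is ensuring the KL divergence is well-defined, i.e., that $q_{\text{FES}}(\hat{\y}\mid \x) > 0$ so that the logarithm does not diverge. Since $\hat{\y}$ is the greedy output for the original input $\x$, and the identity transformation can be regarded as a (limit point of the) FES distribution, the averaged distribution places strictly positive mass on $\hat{\y}$ in any realistic setting; this can be stated cleanly as an absolute-continuity assumption of $q_{\texttt{cons.}}$ with respect to $q_{\text{FES}}$ on $\mathcal{S}$. With this caveat in place, the proposition follows in a single line and no further calculation is needed.
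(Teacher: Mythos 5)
Your proof is correct and follows essentially the same route as the paper's: substitute the Kronecker-delta distribution $q_{\texttt{cons.}}$ into the KL sum and observe that it collapses to the single term at $\y=\hat{\y}$, yielding $-\log q_{\text{FES}}(\hat{\y}\mid\x)$. Your added remarks on the $0\log 0$ convention and the positivity of $q_{\text{FES}}(\hat{\y}\mid\x)$ are minor refinements the paper leaves implicit, but they do not change the argument.
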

\begin{proof}
The proof is given in the Appendix~\ref{sec:fes-uncertainty-closed-form}.
\end{proof}
\blue{It is interesting to note that, resorting to $M_{\text{cons.}}$ makes the quantification self-supervised. While it is solely dependent on model predictions without access to ground truth (unsupervised), it is based on the deviation from a hypothetically ideal model.}
\subsection{Uncertainty Estimation from FCS}
The uncertainty from complementary samples: $\{\x'_1, \x'_2, \dots, \x'_{K_2} \}$, drawn from $\x'_k \sim \mathcal{P}_{\mathrm{FCS}}$, is,\\
\begin{itemize}[leftmargin=*, itemsep=0pt, topsep=0pt]
    \item \textbf{Predictive distribution of $M_{\texttt{sens.}}$}: Based on the definition of FCS, they alter the predictions - $M_{\texttt{sens.}}(\x') \neq M_{\texttt{sens.}}(\x)$. Clearly, under the original support $\mathcal{S}$, the predictive distribution $q(\y \mid \x_k', \x)$ can be any distribution. But, we restrict the support to have only two members as $\mathcal{S}' = \{\hat{\y}, \hat{\y}^c\}$ where $\hat{\y}^c = \{\y: \y\in \mathcal{S}, \y\neq \hat{\y}\}$. Now, the predictive distribution becomes a Kronecker delta with entire mass on $\hat{\y}^c$.
    \begin{align*}
    q_{\texttt{sens.}}(\y \mid \x_k', \x) &= \delta_{\y\hat{\y}^c} \quad \forall\, \x_k' \sim_{\mathcal{C}} \mathcal{\X},\; \y \in \mathcal{S}' 
    \end{align*}
    \item \textbf{Predictive distribution of $M$}: In this case,
    \begin{align*}
q_{FCS}(\y \mid \x) &= \mathbb{E}_{\x_k' \sim \mathcal{P}_{\mathrm{FCS}}} \, q(\y \mid \x_k', \x) \, , \y\in \mathcal{S}'
\end{align*}
\end{itemize}
Now, the uncertainty of model $M$ from FCS is the deviation from $M_{\texttt{sens.}}$ in predictive distribution:
\[
U_{FCS}(M|\x) = D_{KL}(q_{\texttt{sens.}}(\y \mid \x) || q_{FCS}(\y \mid \x))
\]
\begin{proposition}
The $U_{FCS}(M|\x)$ simplifies to:
\[
U_{\text{FCS}}(M \mid \mathbf{x}) := -\log \left(\sum_{\y} q_{\text{FCS}}(\y \neq \hat{\y} \mid \mathbf{x})\right)
\]
\end{proposition}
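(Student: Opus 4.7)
The plan is to mirror the derivation used for Proposition on $U_{\text{FES}}$ but adapted to the restricted binary support $\mathcal{S}' = \{\hat{\y}, \hat{\y}^c\}$ that FCS uses. First I would write out the KL divergence explicitly on this support,
\[
U_{\text{FCS}}(M \mid \x) \;=\; \sum_{\y \in \mathcal{S}'} q_{\texttt{sens.}}(\y \mid \x)\,\log \frac{q_{\texttt{sens.}}(\y \mid \x)}{q_{\text{FCS}}(\y \mid \x)},
\]
and note that by definition $q_{\texttt{sens.}}(\y \mid \x) = \delta_{\y\,\hat{\y}^c}$, i.e.\ it equals $1$ at $\y = \hat{\y}^c$ and $0$ at $\y = \hat{\y}$.

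Next, applying the standard convention $0 \log 0 = 0$, the term corresponding to $\y = \hat{\y}$ vanishes and the sum collapses to a single surviving term,
\[
U_{\text{FCS}}(M \mid \x) \;=\; 1 \cdot \log \frac{1}{q_{\text{FCS}}(\hat{\y}^c \mid \x)} \;=\; -\log q_{\text{FCS}}(\hat{\y}^c \mid \x).
\]
It then remains to unfold the lumped symbol $\hat{\y}^c$ back into its constituents in the original support $\mathcal{S}$. Since $\hat{\y}^c$ aggregates every outcome other than $\hat{\y}$, marginalization gives
\[
q_{\text{FCS}}(\hat{\y}^c \mid \x) \;=\; \sum_{\y \in \mathcal{S},\,\y \neq \hat{\y}} q_{\text{FCS}}(\y \mid \x),
\]
which, substituted back, yields the claimed closed form $-\log \sum_{\y \neq \hat{\y}} q_{\text{FCS}}(\y \mid \x)$, matching the shorthand $-\log\bigl(\sum_{\y} q_{\text{FCS}}(\y \neq \hat{\y} \mid \x)\bigr)$ used in the statement.

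The calculation itself is routine linear algebra on a two-point distribution, so there is no substantive technical obstacle; the only point demanding care is bookkeeping around the support change from $\mathcal{S}$ to $\mathcal{S}'$. I would explicitly justify that restricting to the binary partition $\{\hat{\y}, \hat{\y}^c\}$ is consistent with the definition of $M_{\texttt{sens.}}$ (whose distinguishing property is only that its prediction differs from $\hat{\y}$, not which specific alternative it chooses), and flag the $0 \log 0 = 0$ convention so that the vanishing of the $\y = \hat{\y}$ term is unambiguous. With these two notational points handled, the derivation proceeds in exactly the same three lines as in Appendix~\ref{sec:fes-uncertainty-closed-form} for the FES counterpart.
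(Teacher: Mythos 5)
Your derivation is correct and follows exactly the route the paper intends: the paper's proof is given only as ``similar to Appendix~\ref{sec:fes-uncertainty-closed-form},'' i.e.\ substitute the Kronecker delta $\delta_{\y\hat{\y}^c}$ into the KL divergence so the sum collapses to $-\log q_{\text{FCS}}(\hat{\y}^c\mid\x)$, and then expand $\hat{\y}^c$ as the aggregate of all $\y\neq\hat{\y}$. Your added care about the support restriction to $\mathcal{S}'$ and the $0\log 0=0$ convention is a faithful (and slightly more explicit) rendering of the same argument.
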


\begin{proof}
The proof is similar to Appendix~\ref{sec:fes-uncertainty-closed-form}.
\end{proof}
FES samples do not provide the right framework for quantifying the low-uncertainty hallucinations like insensitivity to the prompt. However, $U_{FCS}$ helps to abstain from such low-uncertainty mis-predictions as such models show no sensitivity to the complementary samples. 
We formally show this for a single attention block in Appendix~\ref{sec:fcs-low-uncertainty-hallucinations}.
\subsection{FESTA uncertainty estimate}
The FESTA uncertainty estimate combines two axes of uncertainty quantification - $U_{FES}$, which measures the model consistency for equivalent samples, and $U_{FCS}$, which measures the model sensitivity for complementary samples.
The combination helps to abstain both high uncertainty and low uncertainty mis-predictions, as evident in Section~\ref{sec:results}. 
\method{} is detailed in Figure~\ref{fig:schematic} and Algorithm~\ref{alg:festa}.
\begin{algorithm}[h!]
\caption{FESTA Uncertainty Estimator}
\label{alg:festa}
\begin{algorithmic}[1]
\Require Input $\X$, original prediction $\hat{\y} = \arg\max_{\y} q(\y|\X)$, no. of samples $K = K_1 + K_2$.
\State \textbf{FES Sampling:}
\State Generate $K_1$ FES samples: $\{\tilde{\x}_k\}_{k=1}^{K_1} \sim P_{\text{FES}}(\tilde{\x}|\X)$ by sampling text and non-text modalities ($\X_T, \X_O$) for ($K_{11}, K_{12}$) times and using all combinations ($K_1=K_{11}\times K_{12}$).
\State Compute $q_{\text{FES}}(\y|\X)$ using $\{{\tilde{\x}_k\}_{k=1}^{K_1}}$
\State Compute FES uncertainty for prediction $\hat{\y}$:
\begin{equation*}
\vspace{-0.05in}
U_{\text{FES}} = -\log q_{\text{FES}}(\y=\hat{\y}|\X)
\vspace{-0.03in}
\end{equation*}

\State \textbf{FCS Sampling:}
\State Generate $K_2$ FCS samples: $\{\x'_k\}_{k=1}^{K_2} \sim P_{\text{FCS}}(\x'|\X)$ by complementary sampling either of the text or non-text modalities (e.g. $\X_T$) for $K_{21}$ times and equivalent sampling the other (e.g. $\X_O$) for $K_{22}$ times, and use all combinations ($K_2=K_{21}\times K_{22}$).
\State Compute $q_{\text{FCS}}(\y|\X)$ using $\{\x'_k\}_{k=1}^{K_2}$.
\State Compute FCS uncertainty for prediction $\hat{\y}$:
\begin{equation*}
U_{\text{FCS}} = -\log\left(\sum_{\y \ne \hat{\y}} q_{\text{FCS}}(\y|\X)\right)
\end{equation*}
\State \textbf{FESTA:} $U_{\text{FESTA}} =$ $U_{\text{FES}}$+ $U_{\text{FCS}}$
\end{algorithmic}
\end{algorithm}

\section{Related Prior Work}
\textbf{Uncertainty estimation for LLMs:} \blue{Uncertainty estimation for Large Language Models (LLMs) is understudied, yet an important area as the uncertainty measure seldom correlates with their task accuracy~\cite{ye2024benchmarking}.}\\
\noindent There can be black-box or white-box approaches to LLM uncertainty estimation~\cite{shorinwa2025survey, wen2025know}. White-box methods leverage internal model access to quantify uncertainty~\cite{chen2024inside}. However, such approaches are restrictive for closed-source models~\cite{lin2024generating}. Black-box methods are suitable for closed-source models as they assume only query access~\cite{gao-etal-2024-spuq, kadavath2022language}, such as input ensembling, augmenting, or rephrasing to estimate predictive entropy~\cite{hou2024decomposing, yang2024just, jiang2023calibrating}. Other approaches for uncertainty estimation for LLMs use conformal prediction~\cite{yadkori2024mitigating, wang-etal-2025-sconu, confot25}. Critical gaps remain in uncertainty estimation for LLMs regarding robustness and efficient sampling strategies~\cite{shorinwa2025survey, abbasi-yadkori2024to, yona2024can}.\\
\noindent \textbf{Uncertainty estimation for MLLMs:} \blue{Although the multimodal LLMs suffer more from erroneous predictions and hallucinations, effective abstention algorithms for them are scarce. The amplified hallucinations for MLLMs can be attributed to the ineffective grounding~\cite{favero2024multi} and fusion~\cite{kang2025see}. A handful of other studies, that reduce MLLM hallucination, model the problem as a causal graph~\cite{li2025treblecounterfactualvlmscausal}, mutual-information decoding with direct preference optimization~\cite{favero2024multi}, and visual attention redistribution~\cite{kang2025see}. However, they need training or access to internal model parameters/log-probabilities. Other works employ uncertainty-aware agentic framework~\cite{zhi2025seeingreasoningconfidencesupercharging}, semantic perturbation~\cite{zhao2025objectlevelverbalizedconfidencecalibration, khan2024consistency} or self-assessment to compute uncertainty~\cite{chen2025unveiling}. In contrast, our work is a fully unsupervised, black-box model-based and post-hoc uncertainty estimation method.}\\
\noindent \blue{The predictive uncertainty estimation can be based on open-ended or multiple-choice (MCQA) predictions. Because of its limited output, simplicity, and faster inference~\cite{zhang2024multiple}, the MCQA framework is widely used in accuracy and uncertainty evaluation benchmarks for  LLMs~\cite{hendrycks2021measuring, wang2024ubench, yang-etal-2025-maqa} and MLLMs~\cite{liu2024mmbench, fu2024mmecomprehensiveevaluationbenchmark, sakshi2025mmau}. However, the free-text applications require uncertainty estimation for open-ended predictions, like semantic entropy~\cite{kuhn2023semantic,farquhar2024detecting} and iterative prompting~\cite{abbasi-yadkori2024to}.}
\section{Experimental Setup}
\subsection{Tasks and Datasets}
We use $3$ datasets on positional reasoning - spatial reasoning for the vision-LLMs and temporal reasoning for audio-LLMs, in this study.\\
\textbf{BLINK}: The BLINK dataset~\cite{fu2024blink} points to the limitations of modern vision-LLMs with binary questions, where relative positions between different objects in an image are queried. The spatial reasoning samples from the validation split ($143$ samples) are chosen for evaluation.\\
\textbf{VSR}: The Visual spatial reasoning (VSR) dataset~\cite{liu2023vsr} also has MCQ questions with two choices. It is entirely focused on diverse spatial reasoning samples. The validation partition with $100$ randomly samples are used for evaluation.\\
\textbf{TREA}: The Temporal Reasoning Evaluation of Audio (TREA) dataset~\cite{bhattacharya25b_interspeech} is a comprehensive audio-temporal reasoning dataset on which audio-LLMs perform poorly~\cite{kuan2025can}. It has MCQ with four answer choices. It further divides the temporal reasoning task into $3$ categories - ordering, duration, and event counting.  We use a subset of $300$ samples ($100$ per task).
\subsection{Multimodal LLMs}
\textbf{Visual spatial reasoning}: We use large vision language models (LVLM)- \texttt{Gemma-3}~\cite{team2025gemma}, \texttt{LLaVA-1.6}~\cite{liu2023visual}, \texttt{Qwen-2.5VL}~\cite{yang2025qwen3}, \texttt{Phi4}~\cite{abdin2024phi} and \texttt{Pixtral}~\cite{agrawal2024pixtral} for the evaluation. All of these models show significantly lower performance (Table~\ref{tab:vlm-results-spatial-reasoning}) compared to humans performance of $>95\%$~\cite{fu2024blink}. Further details and the model cards are in Appendix~\ref{appendix:model-cards}.\\
\textbf{Audio temporal reasoning}: We evaluate two open-source audio-LLMs \texttt{Qwen2-audio}~\cite{chu2024qwen2} and \texttt{SALMONN}~\cite{tang2023salmonn}. 
We observe their temporal reasoning performance to be poor for the tasks (Table~\ref{tab:alm-results-temporal-reasoning}). 
We have also experimented with generating audio captions using the audio-LLMs and then passing the text captions to a text-only LLM \texttt{Qwen-2}~\cite{bai2025qwen2} as suggested in~\cite{bhattacharya25b_interspeech}, for improved accuracy (Table~\ref{tab:alm-results-temporal-reasoning}). More details are in Appendix~\ref{appendix:model-cards}.



\subsection{Comparison with Baseline Systems}
\label{sec:baselines}
\begin{itemize}[leftmargin=*, itemsep=0pt, topsep=0pt]
    \item \textbf{Output entropy (OE)}: The predictive distribution of the models is estimated using stochastic decoding, and the entropy is measured as~\cite{kuhn2023semantic}: $\mathcal{H}\left( q(\y|\x) \right) = - \sum_{\y}q(\y|\x)\log q(\y|\x)$.
    \item \textbf{Verbalized confidence (VC)}: LLMs are good estimators of their own confidence~\cite{tian2023just}, when  verbalized  through  prompting.
    \item \textbf{Input augmentations (IA)}: Based on the approach in~\cite{bahat2020classification} to obtain predictions using input  augmentations and computing entropy from the ensemble. Apart from image augmentations (IA-I), we performed text augmentations (IA-T), using paraphrasing. Finally, we report performance of combined augmentations (IA-IT).
    \item \textbf{Rephrase uncertainty (RU)}: This system~\cite{yang2024just} uses text rephrasing and measures the answer consistency.
    \item \textbf{Black-box uncertainty (BU)}: The work is based on using a combination of top-K prompting and random output sampling that yields the most stable performance~\cite{xiong2024can}. We use top-$4$ prompting with outputs sampling $5$ times.
    \item \textbf{Semantic uncertainty}: \blue{Another popular uncertainty estimation method is the concept of semantic entropy~\cite{kuhn2023semantic, farquhar2024detecting}. It is designed primarily to cater to the open-text answers. However, in the MCQA settings, semantic entropy boils down to Output entropy (OE) (Appendix~\ref{appendix:semantic-entropy-mcq})}.
\end{itemize}
\subsection{Evaluation metric}
\label{sec:eval-metrics}
An uncertainty measure should correlate with the probability of a prediction being incorrect.
The performance of uncertainty methods is evaluated using Area-Under-Receiver-Operating-Curve (AUROC):
\[
\texttt{AUROC} = \texttt{AUC}\left( \frac{1}{U},  \mathds{1}_{\{\hat{\y}=\y_{\texttt{target}}\}}\right)
\]
where $U$ is the uncertainty, $\y _{\texttt{target}}$ denotes the ground-truth, $\hat{\y}$ denotes the model output. 
\subsection{Equivalent Samples (FES)}
\label{subsec:fes-generation}
\blue{Adding to the  description  provided in Section~\ref{subsec:methods-fes}, FES samples are the transformations of the original input that keep the task objective of the original question unchanged ideally.  Transformations used on images to create equivalent spatial reasoning samples are the RGB to grayscale transformation, adding Gaussian noise, and mild levels of blurring. Equivalent transformations on the text (the question) related to multiple para-phrasings. Different combinations of such image and text equivalent samples form the multimodal FES samples.
}

For the audio tasks, FES samples are transformations on the original input that maintain the task and functional equivalence. The transformations are done on both modalities: audio input and textual question. While some transformations, such as adding noise, varying the loudness, paraphrasing the textual question, etc, are generic, FES transformations can be task-specific too. For example, for the order task, where the goal is to answer which audio event occurred after the event $A$, changing the duration of event $A$ by trimming is an FES transformation. However, this is an invalid FES transformation with respect to the event duration task. 
 The complete description of equivalent sampling (FES) transforms used along with the sampling process, and a few examples are detailed in Appendix~\ref{sec:fes-fcs-details} and Figure~\ref{fig:fes_fcs_examples}.\\
\subsection{Complementary samples (FCS)}
\label{subsec:fcs-generation}
\blue{Based on the discussion in Section~\ref{subsec:methods-fcs}, FCS samples are the transformations of the original input that keep the task objective of the original question same, but alter the direction of the spatial/temporal reasoning. In an ideal setting, FCS samples force the MLLM to alter its response with respect to the original input.}

For example, for audio tasks, a complementary transform can be the addition of a new audio event at the start or end of the input audio clip for the audio event counting task. For images, it can be flipping an image horizontally for a ``at the left of'' or ``at the right of'' type of questions. For the text, it can be altering the spatial relationship from ``in front of'' to ``at the behind of''.
The FCS samples used in Table~\ref{tab:vlm-results-spatial-reasoning} use complemented text, as generalized complementary transform on images is difficult. \blue{However, as proof of the concept, we generate complementary image samples for the ``left of'' and ``right of'' questions, and note consistent improvements (Appendix~\ref{appendix:image-fcs-experiment})}.
The FCS transforms used are in Appendix~\ref{sec:fes-fcs-details} and Figure~\ref{fig:fes_fcs_examples}.
\section{Results}
\label{sec:results}
\begin{table*}[t!]
\centering
\resizebox{0.7\linewidth}{!}{
\def\arraystretch{1.4}
\begin{tabular}{lllcccccccc}
\toprule[1.2pt]
\multirow{2}{*}{\centering \textbf{Dataset}} & 
\multirow{2}{*}{\centering \textbf{Model}} & 
\multirow{1}{*}{\centering \textbf{Pred.}} & 
\multicolumn{7}{c}{\textbf{Baseline Results (AUROC)}} &{\textbf{Ours (AUROC)}}   \\
\cmidrule(lr){4-10} \cmidrule(lr){11-11}
& & \textbf{Acc.} & OE & VC & IA-I & IA-T & IA-IT & RU & BU & FESTA \\
\midrule[1.2pt]
\multirow{5}{*}{\textbf{BLINK}}
& \texttt{Gemma-3} & 0.80 & 0.53 & 0.61 & 0.55 & 0.68 & 0.63 & \underline{0.71} & 0.69 & \textbf{0.81 (\textcolor{ForestGreen}{14.1\%})} \\
& \texttt{LLaVA-1.6} & 0.71 & \underline{0.67} & 0.56 & 0.47 & 0.62 & 0.63 & 0.62& 0.51 & \textbf{0.77 (\textcolor{ForestGreen}{14.9\%})} \\
& \texttt{Qwen-2.5-VL} & 0.88 & \underline{0.86} & 0.65 & 0.77 & 0.78 & 0.80 & 0.77 & 0.60 & \textbf{0.93 (\textcolor{ForestGreen}{8.1\%})} \\
& \texttt{Phi-4} & 0.71 & 0.63 & 0.49 & 0.56 & 0.60 & \underline{0.64} & 0.58 & 0.38 & \textbf{0.87 (\textcolor{ForestGreen}{35.9\%})} \\
& \texttt{Pixtral} & 0.78 & 0.72 & 0.59 & \underline{0.75} & 0.70 & 0.73 & 0.75 & 0.58 & \textbf{0.90 (\textcolor{ForestGreen}{20.0\%})} \\
\cmidrule(lr){2-11}
& \textbf{Avg.} & 0.78 & 0.68 & 0.58 & 0.62 & 0.68 & 0.69 & \underline{0.69} & 0.55 & \textbf{0.86 (\textcolor{ForestGreen}{24.6\%})}\\
\midrule
\multirow{5}{*}{\textbf{VSR}}
& \texttt{Gemma-3} & 0.74 & 0.53 & 0.56 & 0.60 & 0.58 & 0.63 & 0.59 & \underline{0.66} & \textbf{0.88 (\textcolor{ForestGreen}{33.3\%})} \\
& \texttt{LLaVA-1.6} & 0.60 & 0.57 & 0.52 & 0.57 & 0.58 & \underline{0.66} & 0.56 & 0.53 & \textbf{0.74 (\textcolor{ForestGreen}{12.1\%})} \\
& \texttt{Qwen-2.5-VL} & 0.95 & 0.65 & 0.47 & 0.61 & \underline{0.65} & 0.61 & 0.63 & 0.59 & \textbf{0.92 (\textcolor{ForestGreen}{41.5\%})} \\
& \texttt{Phi-4} & 0.68 & 0.58 & 0.49 & 0.48 & 0.50 & \underline{0.60} & 0.50 & 0.56 & \textbf{0.94 (\textcolor{ForestGreen}{56.7\%})} \\
& \texttt{Pixtral} & 0.76 & 0.57 & 0.59 & 0.55 & 0.62 & 0.60 & \underline{0.68} & 0.61 & \textbf{0.91 (\textcolor{ForestGreen}{33.8\%})} \\
\cmidrule(lr){2-11}
& \textbf{Avg.} & 0.75 & 0.58 & 0.53 & 0.56 & 0.59 & \underline{0.62} & 0.59 & 0.59 & \textbf{0.88 (\textcolor{ForestGreen}{41.9\%})}\\
\bottomrule[1.2pt]
\end{tabular}
}
\caption{Results for vision-LLMs. The final column in green (\%) reports the relative improvement of \method{} over the best baseline result (underlined).}
\label{tab:vlm-results-spatial-reasoning}
\end{table*}
\begin{table*}[t!]
\centering
\resizebox{0.78\linewidth}{!}{
\def\arraystretch{1.3}
\begin{tabular}{lllcccccccc}
\toprule[1.2pt]
\multirow{2}{*}{\centering \textbf{Dataset}} & 
\multirow{2}{*}{\centering \textbf{Model}} & 
\multirow{1}{*}{\centering \textbf{Pred.}} & 
\multicolumn{7}{c}{\textbf{Baseline Results (AUROC)}} & 
\textbf{Ours (AUROC)} \\
\cmidrule(lr){4-10} \cmidrule(lr){11-11}
& & \textbf{Acc.} & OE & VC & IA-A & IA-T & IA-AT & RU & BU & FESTA \\
\midrule[1.2pt]
\multirow{3}{*}{\textbf{TREA-O}} 
& \texttt{Qwen2-Audio} & 0.51 & 0.66 & 0.67 & 0.65 & 0.68 & 0.63 & \underline{0.70} & 0.53 & \textbf{0.91 (\textcolor{ForestGreen}{30.0\%})} \\
& \texttt{SALMONN} & 0.31 & 0.54 & 0.54 & 0.55 & 0.55 & \underline{0.66} & 0.64 & 0.39 & \textbf{0.86 (\textcolor{ForestGreen}{30.3\%})} \\
& \texttt{SALMONN} desc. + LLM & 0.75 & 0.68 & 0.74 & 0.83 & 0.64 & \underline{0.85} & 0.68 & 0.65 & \textbf{0.90 (\textcolor{ForestGreen}{5.8\%})} \\
\cmidrule(lr){2-11}
& \textbf{Avg.} & 0.52 & 0.63 & 0.65 & 0.68 & 0.62 & \underline{0.71} & 0.67 & 0.52 & \textbf{0.89 (\textcolor{ForestGreen}{25.4\%})} \\
\midrule
\multirow{3}{*}{\textbf{TREA-D}} 
& \texttt{Qwen2-Audio} & 0.45 & 0.61 & \underline{0.75} & 0.55 & 0.62 & 0.59 & 0.66 & 0.56& \textbf{0.75 (\textcolor{gray}{0.0\%})} \\
& \texttt{SALMONN} & 0.35 & 0.54 & 0.48 & 0.50 & \underline{0.56} & 0.50 & 0.54 & 0.50 & \textbf{0.76 (\textcolor{ForestGreen}{35.7\%})} \\
& \texttt{SALMONN} desc. + LLM & 0.49 & 0.57 & 0.49 & 0.67 & 0.55 & \underline{0.69} & 0.56 & 0.54 & \textbf{0.80 (\textcolor{ForestGreen}{15.9\%})} \\
\cmidrule(lr){2-11}
& \textbf{Avg.} & 0.43 & 0.57 & 0.57 & 0.57 & 0.58 & \underline{0.59} & \underline{0.59} & 0.53 & \textbf{0.77 (\textcolor{ForestGreen}{30.5\%})} \\
\midrule
\multirow{3}{*}{\textbf{TREA-C}} 
& \texttt{Qwen2-Audio} & 0.21 & 0.49 & \underline{0.68} & 0.48 & 0.47 & 0.45 & 0.47 & 0.50 & \textbf{0.83 (\textcolor{ForestGreen}{22.1\%})} \\
& \texttt{SALMONN} & 0.20 & 0.34 & 0.40 & \underline{0.46} & 0.29 & 0.32 & 0.27 & 0.43 & \textbf{0.66 (\textcolor{ForestGreen}{43.5\%})} \\
& \texttt{SALMONN} desc. + LLM & 0.50 & 0.61 & \underline{0.66} & 0.54 & 0.54 & 0.45 & 0.55 & 0.62 & \textbf{0.81 (\textcolor{ForestGreen}{22.7\%})} \\
\cmidrule(lr){2-11}
& \textbf{Avg.} & 0.30 & 0.48 & \underline{0.58} & 0.49 & 0.43 & 0.41 & 0.43 & 0.52 & \textbf{0.77 (\textcolor{ForestGreen}{32.8\%})} \\
\bottomrule[1.2pt]
\end{tabular}
}
\caption{Results for audio-LLMs using \texttt{Qwen-2-Audio}~\cite{chu2024qwen2}, \texttt{SALMONN}~\cite{tang2023salmonn} and audio captions generated by \texttt{SALMONN} followed by text-only \texttt{Qwen-2} model (LLM). The final column in green (\%) reports the relative improvement of \method{} over the best baseline result (underlined).}
\label{tab:alm-results-temporal-reasoning}
\end{table*}


\subsection{FESTA uncertainty evaluation}
The performance of FESTA and other baselines on vision-LLMs and audio-LLMs are reported in Tables~\ref{tab:vlm-results-spatial-reasoning} and~\ref{tab:alm-results-temporal-reasoning}. We make the following observations:
\begin{itemize}[leftmargin=*, itemsep=0pt, topsep=0pt]
    \item The abstention AUROC of FESTA uncertainty score outperforms all the black-box baseline approaches significantly, achieving $24.6\%$ and $41.9\%$ relative improvements over the second-best approaches on BLINK and VSR datasets, respectively, averaged over different models. Also, for audio-LLMs, it achieves $25.4\%$, $30.5\%$, and $32.8\%$ relative improvements for order, duration and count tasks, respectively.
    \item Although the audio-LLM performances were poor for temporal reasoning (accuracy: order - $52$\%, duration - $43$\% and count - $30$\%), the AUROC achieved by \method{}  of $0.89$, $0.77$, and $0.77$ respectively, shows its effectiveness for the low-performing reasoning tasks.
    \item For vision-LLMs, \method{} shows effectiveness across low and high accuracy models. The largest improvements were observed for \texttt{Phi-4} (the model with the least size of $5.6$B).
    \item For both audio-LLMs and vision-LLMs, the baseline systems perform inconsistently. None of them consistently provide second-best performance across models and tasks.
\end{itemize}

\subsection{Ablations}
To further probe the improvements provided by \method{}, we conduct the following analyses:
\begin{itemize}
    \item FESTA uncertainty is a quantification of both equivalent and complementary input samplings. We separately analyze the  AUROCs  from equivalent (FES) and complementary (FCS) samples, as shown in Figures~\ref{fig:fes_fcs_barplot} and~\ref{fig:fes_fcs_barplot_vlm}. For the order and duration tasks in audio reasoning, AUROC for \texttt{Qwen2-Audio} is more influenced by the FCS, but \texttt{SALMONN} and \texttt{SALMONN} desc.+LLM models are more influenced by the FES. It shows the complementary nature of FES and FCS inputs under different scenarios. For the poorly performing event count task, FCS uncertainty contributes the most, showing its robustness in such low-accuracy scenarios.
    \begin{figure*}[t]
    \centering
    \includegraphics[width=1.0\linewidth, height=0.27\textwidth]{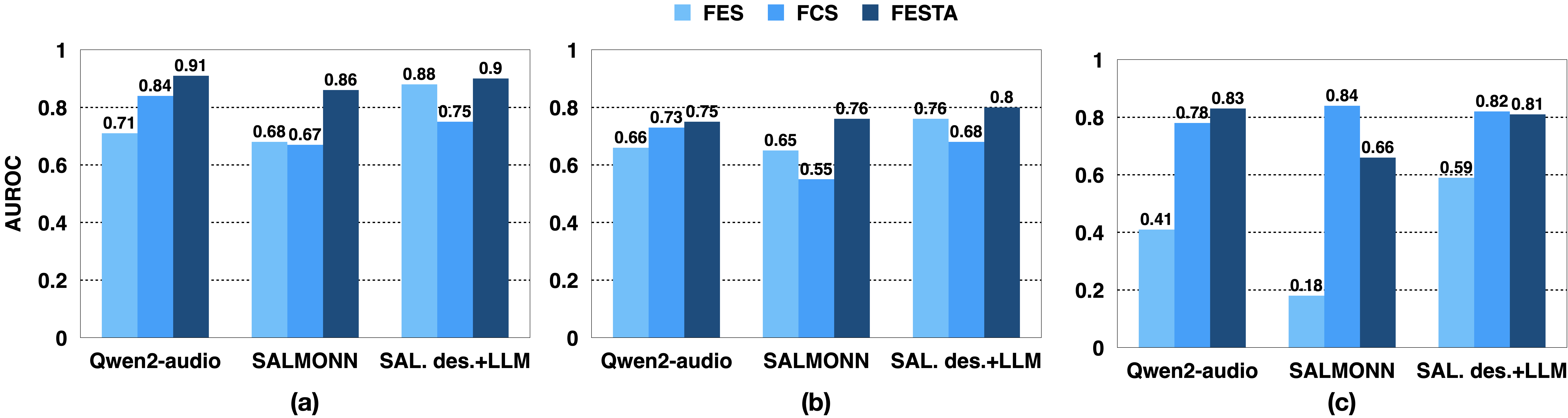}
    \caption{AUROC for uncertainty based on FES, FCS and \method{} on (a) TREA-O, (b) TREA-D, and (c) TREA-C.
    }
    \label{fig:fes_fcs_barplot}
\end{figure*}
\begin{figure}[t]
    \centering
    \includegraphics[width=1.0\linewidth, height=0.6\textwidth]{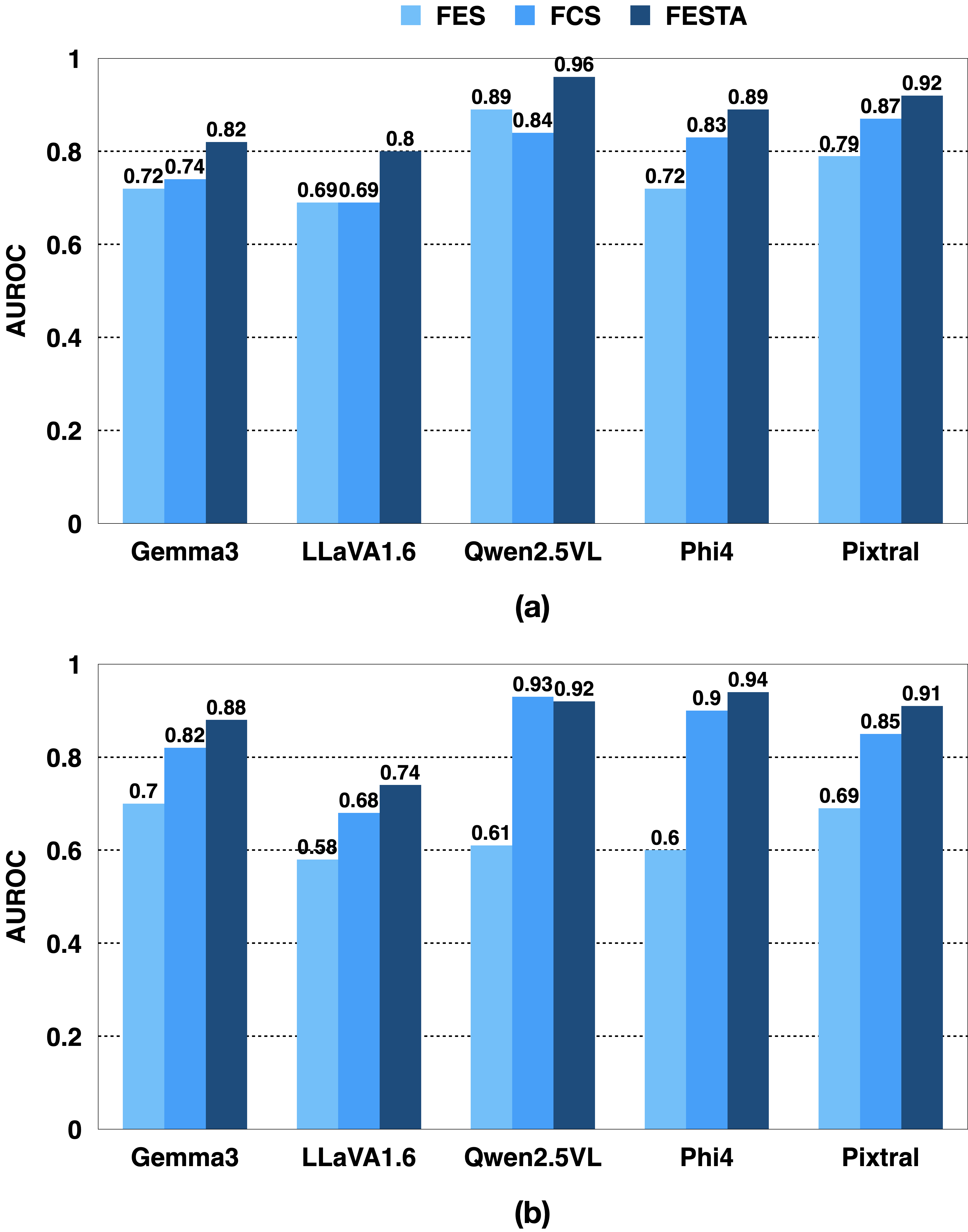}
    \caption{AUROC for uncertainty  based on FES, FCS and \method{} on (a) BLINK, (b) VSR data.
    }
    \label{fig:fes_fcs_barplot_vlm}
\end{figure}
    \item Appendix Figure~\ref{fig:scatter_plots_festa} shows the scatter plot of the confidence  scores ($\frac{1}{\text{uncertainty}}$). Figure~\ref{fig:scatter_plots_oe} shows the scatter plot of the detection scores for the output sampling baseline approach (OE). It is evident that poor baseline AUROCs for different models are primarily caused by low uncertainty mis-predictions, which are majorly detected using \method.
    \item We also compare the effectiveness of our proposed uncertainty quantification method, which uses the KL-divergence distance from the predictive distribution of an ideally consistent and sensitive model. To compare, we compute the AUROCs by replacing the KL-divergence with the standard entropy measure and the results are reported in Appendix Tables~\ref{tab:kl-vs-entropy-audio} and~\ref{tab:kl-vs-entropy-vision}. They highlight the superiority of using the KL distance in a pseudo-supervised fashion, over simply the entropy.
\end{itemize}

\subsection{Number of equivalent samples}
For the vision tasks, the value of $K$ was varied from $8$ to $112$, including FES and FCS samples, while for audio tasks, $K$ was varied from $10$ to $120$ (Appendix Tables~\ref{tab:auc-sample-scaling-blink},~\ref{tab:auc-sample-scaling-vsr},~\ref{tab:auc-sample-scaling-trea-count},~\ref{tab:auc-sample-scaling-trea-duration} and~\ref{tab:auc-sample-scaling-trea-order}). 
The results from this analysis highlight that, even with $K=16$ (vision tasks) and $K=20$ (audio tasks), \method{} can provide a robust estimate of model uncertainty.

\section{Conclusion}
The proposed multimodal uncertainty estimation algorithm, \method{}, is presented as a principled and formally grounded framework for selective prediction by MLLMs. Firstly, it introduces a novel input sampling paradigm based on functional equivalence (FES) and complementarity (FCS), a previously unexplored space, and demonstrates its effectiveness for abstention in the unsupervised and black-box settings. Secondly, it proposes the KL-distance from a hypothetically ideal model as an uncertainty score over the widely used entropy.\\
FESTA enables accurate abstention from incorrect predictions in both audio-LLMs and vision-LLMs, elicited by improved AUROC values.
Given the limited grounding of current MLLMs (especially lightweight MLLMs), they tend to produce biased, low-uncertainty hallucinations. A key contribution of \method{} is its ability to detect and abstain from such hallucinations. This acts as the core reason behind the substantial improvements in selective prediction performance across models.\\
Building on these findings, we plan to extend \method{} to support open-ended generation and multimodal outputs in MLLMs, beyond MCQA tasks.

\section{Acknowledgements}
We thank TCS for supporting this work through the TCS Research Scholar Program and Google for providing compute grants through the GCP Credit Award.

\section{Limitations}
Despite its effectiveness, the current formulation of \method{} has a few limitations and opens several avenues for  future work:
\begin{itemize}[leftmargin=*, itemsep=2pt, topsep=2pt]
    \item \textbf{Computational Overhead:} Unlike standard LLMs, \method{} relies on input samples from both text and non-text modalities, increasing computational demand.  
    This also means that about $K$ additional inferences are made per sample, thereby, increasing the computational demand for computing the uncertainty metric. 
    While the current implementation prioritizes predictive performance—suitable for high-stakes scenarios such as safety-critical applications—reducing latency using FES/FCS samples passed through quantized versions of the models may be undertaken as future research to further reduce the computational demand. 

    \item \textbf{Generation of FES and FCS:} The audio and visual samples for FES and FCS are generated using standard audio/image augmentation tools as well as text rephrasing using LLMs, with limited human intervention. While this process is currently semi-automated with limited human supervision and performed seamlessly for the tasks considered in this work, fully automating the process of FES and FCS generations for more complex audio/image tasks might require additional work.
    
    \item \textbf{Improving LLM Behavior with FES and FCS:} In the current work, the FES and FCS were used to analyze the uncertainty, without attempting to improve the base model performance. However, the dual-space sampling strategy (functional equivalence and complementarity) could also be incorporated as in-context prompts to nudge the LLMs to ensure equivalence/complementarity in the outputs, thereby improving base LLM accuracy. For example, one could input $K$ FES  and prompt the LLM to ensure that the answer to the original prompt should also match the response to all the $K$ FES. With this constraint, the LLM would be forced to generate a consistent response that matches the response to the FES, thereby improving the prediction accuracy. 
    
    \item \textbf{Extension to open-text Generation:}  In the current work, the scope was limited to multi-choice question answering tasks only. Many real-world applications of multimodal LLMs require open-ended, free-form text outputs. A significant future direction is to extend \method{} beyond classification tasks, enabling uncertainty-aware abstention in generative settings. Further, audio and image generation tasks open up new avenues for uncertainty estimation, which is not addressed in this work. 
\end{itemize}


\bibliography{references}

\begin{thebibliography}{60}
\providecommand{\natexlab}[1]{#1}

\bibitem[{Abbasi-Yadkori et~al.(2024)Abbasi-Yadkori, Kuzborskij, Gy{\"o}rgy, and Szepesvari}]{abbasi-yadkori2024to}
Yasin Abbasi-Yadkori, Ilja Kuzborskij, Andr{\'a}s Gy{\"o}rgy, and Csaba Szepesvari. 2024.
\newblock To believe or not to believe your {LLM}: Iterativeprompting for estimating epistemic uncertainty.
\newblock In \emph{The Thirty-eighth Annual Conference on Neural Information Processing Systems}.

\bibitem[{Abdin et~al.(2024)Abdin, Aneja, Behl, Bubeck, Eldan, Gunasekar, Harrison, Hewett, Javaheripi, Kauffmann et~al.}]{abdin2024phi}
Marah Abdin, Jyoti Aneja, Harkirat Behl, S{\'e}bastien Bubeck, Ronen Eldan, Suriya Gunasekar, Michael Harrison, Russell~J Hewett, Mojan Javaheripi, Piero Kauffmann, and 1 others. 2024.
\newblock Phi-4 technical report.
\newblock \emph{arXiv preprint arXiv:2412.08905}.

\bibitem[{Agrawal et~al.(2024)Agrawal, Antoniak, Hanna, Bout, Chaplot, Chudnovsky, Costa, De~Monicault, Garg, Gervet et~al.}]{agrawal2024pixtral}
Pravesh Agrawal, Szymon Antoniak, Emma~Bou Hanna, Baptiste Bout, Devendra Chaplot, Jessica Chudnovsky, Diogo Costa, Baudouin De~Monicault, Saurabh Garg, Theophile Gervet, and 1 others. 2024.
\newblock Pixtral 12b.
\newblock \emph{arXiv preprint arXiv:2410.07073}.

\bibitem[{Amodei et~al.(2016)Amodei, Olah, Steinhardt, Christiano, Schulman, and Man{\'e}}]{amodei2016concrete}
Dario Amodei, Chris Olah, Jacob Steinhardt, Paul Christiano, John Schulman, and Dan Man{\'e}. 2016.
\newblock Concrete problems in ai safety.
\newblock \emph{arXiv preprint arXiv:1606.06565}.

\bibitem[{Bahat and Shakhnarovich(2020)}]{bahat2020classification}
Yuval Bahat and Gregory Shakhnarovich. 2020.
\newblock Classification confidence estimation with test-time data-augmentation.
\newblock \emph{arXiv preprint arXiv:2006.16705}.

\bibitem[{Bai et~al.(2025)Bai, Chen, Liu, Wang, Ge, Song, Dang, Wang, Wang, Tang et~al.}]{bai2025qwen2}
Shuai Bai, Keqin Chen, Xuejing Liu, Jialin Wang, Wenbin Ge, Sibo Song, Kai Dang, Peng Wang, Shijie Wang, Jun Tang, and 1 others. 2025.
\newblock Qwen2. 5-vl technical report.
\newblock \emph{arXiv preprint arXiv:2502.13923}.

\bibitem[{Bhattacharya et~al.(2025)Bhattacharya, Kulkarni, and Ganapathy}]{bhattacharya25b_interspeech}
Debarpan Bhattacharya, Apoorva Kulkarni, and Sriram Ganapathy. 2025.
\newblock \href {https://doi.org/10.21437/Interspeech.2025-1228} {{Benchmarking and Confidence Evaluation of LALMs For Temporal Reasoning}}.
\newblock In \emph{{Interspeech 2025}}, pages 2068--2072.
\newblock Available: \url{https://arxiv.org/abs/2505.13115}.

\bibitem[{Brown et~al.(2020)Brown, Mann, Ryder, Subbiah, Kaplan, Dhariwal, Neelakantan, Shyam, Sastry, Askell et~al.}]{brown2020gpt3}
Tom~B. Brown, Benjamin Mann, Nick Ryder, Melanie Subbiah, Jared Kaplan, Prafulla Dhariwal, Arvind Neelakantan, Pranav Shyam, Girish Sastry, Amanda Askell, and 1 others. 2020.
\newblock Language models are few-shot learners.
\newblock \emph{Advances in Neural Information Processing Systems}, 33:1877--1901.

\bibitem[{Chen et~al.(2024)Chen, Liu, Chen, Gu, Wu, Tao, Fu, and Ye}]{chen2024inside}
Chao Chen, Kai Liu, Ze~Chen, Yi~Gu, Yue Wu, Mingyuan Tao, Zhihang Fu, and Jieping Ye. 2024.
\newblock {INSIDE}: {LLM}s' internal states retain the power of hallucination detection.
\newblock In \emph{The Twelfth International Conference on Learning Representations}.

\bibitem[{Chen et~al.(2025)Chen, Hu, He, Deng, Zhang, and Hong}]{chen2025unveiling}
Zijun Chen, Wenbo Hu, Guande He, Zhijie Deng, Zheng Zhang, and Richang Hong. 2025.
\newblock Unveiling uncertainty: A deep dive into calibration and performance of multimodal large language models.
\newblock In \emph{Proceedings of the 31st International Conference on Computational Linguistics}, pages 3095--3109.

\bibitem[{Chu et~al.(2024)Chu, Xu, Yang, Wei, Wei, Guo, Leng, Lv, He, Lin et~al.}]{chu2024qwen2}
Yunfei Chu, Jin Xu, Qian Yang, Haojie Wei, Xipin Wei, Zhifang Guo, Yichong Leng, Yuanjun Lv, Jinzheng He, Junyang Lin, and 1 others. 2024.
\newblock Qwen2-audio technical report.
\newblock \emph{arXiv preprint arXiv:2407.10759}.

\bibitem[{Farquhar et~al.(2024)Farquhar, Kossen, Kuhn, and Gal}]{farquhar2024detecting}
Sebastian Farquhar, Jannik Kossen, Lorenz Kuhn, and Yarin Gal. 2024.
\newblock Detecting hallucinations in large language models using semantic entropy.
\newblock \emph{Nature}, 630(8017):625--630.

\bibitem[{Favero et~al.(2024)Favero, Zancato, Trager, Choudhary, Perera, Achille, Swaminathan, and Soatto}]{favero2024multi}
Alessandro Favero, Luca Zancato, Matthew Trager, Siddharth Choudhary, Pramuditha Perera, Alessandro Achille, Ashwin Swaminathan, and Stefano Soatto. 2024.
\newblock Multi-modal hallucination control by visual information grounding.
\newblock In \emph{Proceedings of the IEEE/CVF Conference on Computer Vision and Pattern Recognition}, pages 14303--14312.

\bibitem[{Fu et~al.(2024{\natexlab{a}})Fu, Chen, Shen, Qin, Zhang, Lin, Yang, Zheng, Li, Sun, Wu, and Ji}]{fu2024mmecomprehensiveevaluationbenchmark}
Chaoyou Fu, Peixian Chen, Yunhang Shen, Yulei Qin, Mengdan Zhang, Xu~Lin, Jinrui Yang, Xiawu Zheng, Ke~Li, Xing Sun, Yunsheng Wu, and Rongrong Ji. 2024{\natexlab{a}}.
\newblock \href {https://arxiv.org/abs/2306.13394} {Mme: A comprehensive evaluation benchmark for multimodal large language models}.
\newblock \emph{Preprint}, arXiv:2306.13394.

\bibitem[{Fu et~al.(2024{\natexlab{b}})Fu, Hu, Li, Feng, Wang, Lin, Roth, Smith, Ma, and Krishna}]{fu2024blink}
Xingyu Fu, Yushi Hu, Bangzheng Li, Yu~Feng, Haoyu Wang, Xudong Lin, Dan Roth, Noah~A Smith, Wei-Chiu Ma, and Ranjay Krishna. 2024{\natexlab{b}}.
\newblock Blink: Multimodal large language models can see but not perceive.
\newblock In \emph{European Conference on Computer Vision}, pages 148--166. Springer.

\bibitem[{Gao et~al.(2024)Gao, Zhang, Mouatadid, and Das}]{gao-etal-2024-spuq}
Xiang Gao, Jiaxin Zhang, Lalla Mouatadid, and Kamalika Das. 2024.
\newblock \href {https://doi.org/10.18653/v1/2024.eacl-long.143} {{SPUQ}: Perturbation-based uncertainty quantification for large language models}.
\newblock In \emph{Proceedings of the 18th Conference of the European Chapter of the Association for Computational Linguistics (Volume 1: Long Papers)}, pages 2336--2346, St. Julian{'}s, Malta. Association for Computational Linguistics.

\bibitem[{Guo et~al.(2017)Guo, Pleiss, Sun, and Weinberger}]{guo2017calibration}
Chuan Guo, Geoff Pleiss, Yu~Sun, and Kilian~Q Weinberger. 2017.
\newblock On calibration of modern neural networks.
\newblock In \emph{Proceedings of the 34th International Conference on Machine Learning}. PMLR.

\bibitem[{Hendrycks et~al.(2022)Hendrycks, Basart, Mazeika, Mostajabi, Steinhardt, and Song}]{hendrycks2022scaling}
Dan Hendrycks, Steven Basart, Mantas Mazeika, Mohammadreza Mostajabi, Jacob Steinhardt, and Dawn Song. 2022.
\newblock Scaling out-of-distribution detection for real-world settings.
\newblock \emph{arXiv preprint arXiv:2208.09143}.

\bibitem[{Hendrycks et~al.(2021{\natexlab{a}})Hendrycks, Burns, Basart, Zou, Mazeika, Song, and Steinhardt}]{hendrycks2021measuring}
Dan Hendrycks, Collin Burns, Steven Basart, Andy Zou, Mantas Mazeika, Dawn Song, and Jacob Steinhardt. 2021{\natexlab{a}}.
\newblock Measuring massive multitask language understanding.
\newblock In \emph{International Conference on Learning Representations}.

\bibitem[{Hendrycks et~al.(2021{\natexlab{b}})Hendrycks, Carlini, Schulman, and Steinhardt}]{hendrycks2021unsolved}
Dan Hendrycks, Nicholas Carlini, John Schulman, and Jacob Steinhardt. 2021{\natexlab{b}}.
\newblock Unsolved problems in ml safety.
\newblock \emph{arXiv preprint arXiv:2109.13916}.

\bibitem[{Hou et~al.(2024)Hou, Liu, Qian, Andreas, Chang, and Zhang}]{hou2024decomposing}
Bairu Hou, Yujian Liu, Kaizhi Qian, Jacob Andreas, Shiyu Chang, and Yang Zhang. 2024.
\newblock Decomposing uncertainty for large language models through input clarification ensembling.
\newblock In \emph{International Conference on Machine Learning}, pages 19023--19042. PMLR.

\bibitem[{Jiang et~al.(2023)Jiang, Ruan, Huang, Liao, Pitis, Grosse, and Ba}]{jiang2023calibrating}
Mingjian Jiang, Yangjun Ruan, Sicong Huang, Saifei Liao, Silviu Pitis, Roger~Baker Grosse, and Jimmy Ba. 2023.
\newblock Calibrating language models via augmented prompt ensembles.
\newblock In \emph{International Conference on Machine Learning}.

\bibitem[{Kadavath et~al.(2022)Kadavath, Conerly, Askell, Henighan, Drain, Perez, Schiefer, Hatfield-Dodds, DasSarma, Tran-Johnson et~al.}]{kadavath2022language}
Saurav Kadavath, Tom Conerly, Amanda Askell, Tom Henighan, Dawn Drain, Ethan Perez, Nicholas Schiefer, Zac Hatfield-Dodds, Nova DasSarma, Eli Tran-Johnson, and 1 others. 2022.
\newblock Language models (mostly) know what they know.
\newblock \emph{arXiv preprint arXiv:2207.05221}.

\bibitem[{Kang et~al.(2025)Kang, Kim, Kim, and Hwang}]{kang2025see}
Seil Kang, Jinyeong Kim, Junhyeok Kim, and Seong~Jae Hwang. 2025.
\newblock \href {https://openreview.net/forum?id=7uDI7w5RQA} {See what you are told: Visual attention sink in large multimodal models}.
\newblock In \emph{The Thirteenth International Conference on Learning Representations}.

\bibitem[{Khan and Fu(2024)}]{khan2024consistency}
Zaid Khan and Yun Fu. 2024.
\newblock Consistency and uncertainty: Identifying unreliable responses from black-box vision-language models for selective visual question answering.
\newblock In \emph{Proceedings of the IEEE/CVF Conference on Computer Vision and Pattern Recognition}, pages 10854--10863.

\bibitem[{Kuan and Lee(2025)}]{kuan2025can}
Chun-Yi Kuan and Hung-yi Lee. 2025.
\newblock Can large audio-language models truly hear? tackling hallucinations with multi-task assessment and stepwise audio reasoning.
\newblock In \emph{ICASSP 2025-2025 IEEE International Conference on Acoustics, Speech and Signal Processing (ICASSP)}, pages 1--5. IEEE.

\bibitem[{Kuhn et~al.(2023)Kuhn, Gal, and Farquhar}]{kuhn2023semantic}
Lorenz Kuhn, Yarin Gal, and Sebastian Farquhar. 2023.
\newblock Semantic uncertainty: Linguistic invariances for uncertainty estimation in natural language generation.
\newblock In \emph{The Eleventh International Conference on Learning Representations}.

\bibitem[{Kürbis et~al.(2024)Kürbis, Ciolek, Beck, and Gurevych}]{kurbis2024uncertainty}
Lukas Kürbis, Luke Ciolek, Daniel Beck, and Iryna Gurevych. 2024.
\newblock Uncertainty quantification for large language models: A survey.
\newblock \emph{arXiv preprint arXiv:2402.16120}.

\bibitem[{Li et~al.(2025)Li, Qu, Zhou, Qin, Yang, and Zhao}]{li2025treblecounterfactualvlmscausal}
Shawn Li, Jiashu Qu, Yuxiao Zhou, Yuehan Qin, Tiankai Yang, and Yue Zhao. 2025.
\newblock \href {https://arxiv.org/abs/2503.06169} {Treble counterfactual vlms: A causal approach to hallucination}.
\newblock \emph{Preprint}, arXiv:2503.06169.

\bibitem[{Li et~al.(2024)Li, Lin, and Pei}]{li2024multi}
Shengzhi Li, Rongyu Lin, and Shichao Pei. 2024.
\newblock Multi-modal preference alignment remedies degradation of visual instruction tuning on language models.
\newblock In \emph{Proceedings of the 62nd Annual Meeting of the Association for Computational Linguistics (Volume 1: Long Papers)}, pages 14188--14200.

\bibitem[{Lin et~al.(2024)Lin, Trivedi, and Sun}]{lin2024generating}
Zhen Lin, Shubhendu Trivedi, and Jimeng Sun. 2024.
\newblock Generating with confidence: Uncertainty quantification for black-box large language models.
\newblock \emph{Transactions on Machine Learning Research}.

\bibitem[{Ling et~al.(2024)Ling, Zhao, Zhang, Cheng, Liu, Sun, Oishi, Osaki, Matsuda, Ji et~al.}]{ling2024uncertainty}
Chen Ling, Xujiang Zhao, Xuchao Zhang, Wei Cheng, Yanchi Liu, Yiyou Sun, Mika Oishi, Takao Osaki, Katsushi Matsuda, Jie Ji, and 1 others. 2024.
\newblock Uncertainty quantification for in-context learning of large language models.
\newblock \emph{arXiv preprint arXiv:2402.10189}.

\bibitem[{Liu et~al.(2023{\natexlab{a}})Liu, Emerson, and Collier}]{liu2023vsr}
Fangyu Liu, Guy Emerson, and Nigel Collier. 2023{\natexlab{a}}.
\newblock Visual spatial reasoning.
\newblock \emph{Transactions of the Association for Computational Linguistics}, 11:635--651.

\bibitem[{Liu et~al.(2023{\natexlab{b}})Liu, Li, Wu, and Lee}]{liu2023visual}
Haotian Liu, Chunyuan Li, Qingyang Wu, and Yong~Jae Lee. 2023{\natexlab{b}}.
\newblock Visual instruction tuning.
\newblock \emph{Advances in neural information processing systems}, 36:34892--34916.

\bibitem[{Liu et~al.(2024)Liu, Duan, Zhang, Li, Zhang, Zhao, Yuan, Wang, He, Liu et~al.}]{liu2024mmbench}
Yuan Liu, Haodong Duan, Yuanhan Zhang, Bo~Li, Songyang Zhang, Wangbo Zhao, Yike Yuan, Jiaqi Wang, Conghui He, Ziwei Liu, and 1 others. 2024.
\newblock Mmbench: Is your multi-modal model an all-around player?
\newblock In \emph{European conference on computer vision}, pages 216--233. Springer.

\bibitem[{Madhusudhan et~al.(2025)Madhusudhan, Madhusudhan, Yadav, and Hashemi}]{madhusudhan2025llms}
Nishanth Madhusudhan, Sathwik~Tejaswi Madhusudhan, Vikas Yadav, and Masoud Hashemi. 2025.
\newblock Do llms know when to not answer? investigating abstention abilities of large language models.
\newblock In \emph{Proceedings of the 31st International Conference on Computational Linguistics}, pages 9329--9345.

\bibitem[{OpenAI(2023)}]{openai2023gpt4}
OpenAI. 2023.
\newblock Gpt-4 technical report.
\newblock \emph{arXiv preprint arXiv:2303.08774}.

\bibitem[{Sakshi et~al.(2025)Sakshi, Tyagi, Kumar, Seth, Selvakumar, Nieto, Duraiswami, Ghosh, and Manocha}]{sakshi2025mmau}
S~Sakshi, Utkarsh Tyagi, Sonal Kumar, Ashish Seth, Ramaneswaran Selvakumar, Oriol Nieto, Ramani Duraiswami, Sreyan Ghosh, and Dinesh Manocha. 2025.
\newblock {MMAU}: A massive multi-task audio understanding and reasoning benchmark.
\newblock In \emph{The Thirteenth International Conference on Learning Representations}.

\bibitem[{Shorinwa et~al.(2025)Shorinwa, Mei, Lidard, Ren, and Majumdar}]{shorinwa2025survey}
Ola Shorinwa, Zhiting Mei, Justin Lidard, Allen~Z Ren, and Anirudha Majumdar. 2025.
\newblock A survey on uncertainty quantification of large language models: Taxonomy, open research challenges, and future directions.
\newblock \emph{ACM Computing Surveys}.

\bibitem[{Silva-Rodríguez et~al.(2025)Silva-Rodríguez, Ben~Ayed, and Dolz}]{confot25}
Julio Silva-Rodríguez, Ismail Ben~Ayed, and Jose Dolz. 2025.
\newblock Conformal prediction for zero-shot models.
\newblock In \emph{Proceedings of the Computer Vision and Pattern Recognition Conference (CVPR)}, pages 19931--19941.

\bibitem[{Simhi et~al.(2025)Simhi, Itzhak, Barez, Stanovsky, and Belinkov}]{simhi2025trust}
Adi Simhi, Itay Itzhak, Fazl Barez, Gabriel Stanovsky, and Yonatan Belinkov. 2025.
\newblock Trust me, i'm wrong: High-certainty hallucinations in llms.
\newblock \emph{arXiv preprint arXiv:2502.12964}.

\bibitem[{Tang et~al.(2023)Tang, Yu, Sun, Chen, Tan, Li, Lu, Ma, and Zhang}]{tang2023salmonn}
Changli Tang, Wenyi Yu, Guangzhi Sun, Xianzhao Chen, Tian Tan, Wei Li, Lu~Lu, Zejun Ma, and Chao Zhang. 2023.
\newblock Salmonn: Towards generic hearing abilities for large language models.
\newblock \emph{arXiv preprint arXiv:2310.13289}.

\bibitem[{Team et~al.(2025)Team, Kamath, Ferret, Pathak, Vieillard, Merhej, Perrin, Matejovicova, Ram{\'e}, Rivi{\`e}re et~al.}]{team2025gemma}
Gemma Team, Aishwarya Kamath, Johan Ferret, Shreya Pathak, Nino Vieillard, Ramona Merhej, Sarah Perrin, Tatiana Matejovicova, Alexandre Ram{\'e}, Morgane Rivi{\`e}re, and 1 others. 2025.
\newblock Gemma 3 technical report.
\newblock \emph{arXiv preprint arXiv:2503.19786}.

\bibitem[{Tian et~al.(2023)Tian, Mitchell, Zhou, Sharma, Rafailov, Yao, Finn, and Manning}]{tian2023just}
Katherine Tian, Eric Mitchell, Allan Zhou, Archit Sharma, Rafael Rafailov, Huaxiu Yao, Chelsea Finn, and Christopher~D Manning. 2023.
\newblock Just ask for calibration: Strategies for eliciting calibrated confidence scores from language models fine-tuned with human feedback.
\newblock In \emph{Proceedings of the 2023 Conference on Empirical Methods in Natural Language Processing}, pages 5433--5442.

\bibitem[{Touvron et~al.(2023)Touvron, Lavril, Izacard, Martinet, Lachaux, Lacroix, Rozi{\`e}re, Goyal, Hambro, Azhar et~al.}]{touvron2023llama}
Hugo Touvron, Thibaut Lavril, Gautier Izacard, Xavier Martinet, Marie-Anne Lachaux, Timoth{\\'e}e Lacroix, Baptiste Rozi{\`e}re, Naman Goyal, Eric Hambro, Faisal Azhar, and 1 others. 2023.
\newblock Llama: Open and efficient foundation language models.
\newblock \emph{arXiv preprint arXiv:2302.13971}.

\bibitem[{Truong et~al.(2023)Truong, Baldwin, Verspoor, and Cohn}]{truong2023language}
Thinh~Hung Truong, Timothy Baldwin, Karin Verspoor, and Trevor Cohn. 2023.
\newblock Language models are not naysayers: an analysis of language models on negation benchmarks.
\newblock In \emph{Proceedings of the 12th Joint Conference on Lexical and Computational Semantics (* SEM 2023)}, pages 101--114.

\bibitem[{Wang et~al.(2024)Wang, Zhang, Chen, Li, Luo, Han, Wang, Gao, Hu et~al.}]{wang2024ubench}
Xunzhi Wang, Zhuowei Zhang, Gaonan Chen, Qiongyu Li, Bitong Luo, Zhixin Han, Haotian Wang, Hang Gao, Mengting Hu, and 1 others. 2024.
\newblock Ubench: Benchmarking uncertainty in large language models with multiple choice questions.
\newblock \emph{arXiv preprint arXiv:2406.12784}.

\bibitem[{Wang et~al.(2023)Wang, Wei, and et~al.}]{wang2023llmjudge}
Yuxin Wang, Jason Wei, and et~al. 2023.
\newblock Llm-as-a-judge: Are large language models reliable evaluators?
\newblock In \emph{NeurIPS}.

\bibitem[{Wang et~al.(2025)Wang, Wang, Zhang, Chen, Zhu, Shi, and Xu}]{wang-etal-2025-sconu}
Zhiyuan Wang, Qingni Wang, Yue Zhang, Tianlong Chen, Xiaofeng Zhu, Xiaoshuang Shi, and Kaidi Xu. 2025.
\newblock \href {https://doi.org/10.18653/v1/2025.acl-long.934} {{SC}on{U}: Selective conformal uncertainty in large language models}.
\newblock In \emph{Proceedings of the 63rd Annual Meeting of the Association for Computational Linguistics (Volume 1: Long Papers)}, pages 19052--19075, Vienna, Austria. Association for Computational Linguistics.

\bibitem[{Wen et~al.(2025)Wen, Yao, Feng, Xu, Tsvetkov, Howe, and Wang}]{wen2025know}
Bingbing Wen, Jihan Yao, Shangbin Feng, Chenjun Xu, Yulia Tsvetkov, Bill Howe, and Lucy~Lu Wang. 2025.
\newblock Know your limits: A survey of abstention in large language models.
\newblock \emph{Transactions of the Association for Computational Linguistics}, 13:529--556.

\bibitem[{Xiong et~al.(2024)Xiong, Hu, Lu, LI, Fu, He, and Hooi}]{xiong2024can}
Miao Xiong, Zhiyuan Hu, Xinyang Lu, YIFEI LI, Jie Fu, Junxian He, and Bryan Hooi. 2024.
\newblock \href {https://openreview.net/forum?id=gjeQKFxFpZ} {Can {LLM}s express their uncertainty? an empirical evaluation of confidence elicitation in {LLM}s}.
\newblock In \emph{The Twelfth International Conference on Learning Representations}.

\bibitem[{Yadkori et~al.(2024)Yadkori, Kuzborskij, Stutz, Gy{\"o}rgy, Fisch, Doucet, Beloshapka, Weng, Yang, Szepesv{\'a}ri et~al.}]{yadkori2024mitigating}
Yasin~Abbasi Yadkori, Ilja Kuzborskij, David Stutz, Andr{\'a}s Gy{\"o}rgy, Adam Fisch, Arnaud Doucet, Iuliya Beloshapka, Wei-Hung Weng, Yao-Yuan Yang, Csaba Szepesv{\'a}ri, and 1 others. 2024.
\newblock Mitigating llm hallucinations via conformal abstention.
\newblock \emph{arXiv preprint arXiv:2405.01563}.

\bibitem[{Yang et~al.(2024)Yang, Chen, and Pitas}]{yang2024just}
Adam~X. Yang, Chen Chen, and Konstantinos Pitas. 2024.
\newblock Just rephrase it! uncertainty estimation in closed-source language models via multiple rephrased queries.
\newblock In \emph{Neurips Safe Generative AI Workshop 2024}.

\bibitem[{Yang et~al.(2025{\natexlab{a}})Yang, Li, Yang, Zhang, Hui, Zheng, Yu, Gao, Huang, Lv et~al.}]{yang2025qwen3}
An~Yang, Anfeng Li, Baosong Yang, Beichen Zhang, Binyuan Hui, Bo~Zheng, Bowen Yu, Chang Gao, Chengen Huang, Chenxu Lv, and 1 others. 2025{\natexlab{a}}.
\newblock Qwen3 technical report.
\newblock \emph{arXiv preprint arXiv:2505.09388}.

\bibitem[{Yang et~al.(2025{\natexlab{b}})Yang, Yoo, and Lee}]{yang-etal-2025-maqa}
Yongjin Yang, Haneul Yoo, and Hwaran Lee. 2025{\natexlab{b}}.
\newblock \href {https://doi.org/10.18653/v1/2025.findings-naacl.325} {{MAQA}: Evaluating uncertainty quantification in {LLM}s regarding data uncertainty}.
\newblock In \emph{Findings of the Association for Computational Linguistics: NAACL 2025}, pages 5846--5863, Albuquerque, New Mexico. Association for Computational Linguistics.

\bibitem[{Ye et~al.(2024)Ye, Yang, Pang, Wang, Wong, Yilmaz, Shi, and Tu}]{ye2024benchmarking}
Fanghua Ye, Mingming Yang, Jianhui Pang, Longyue Wang, Derek Wong, Emine Yilmaz, Shuming Shi, and Zhaopeng Tu. 2024.
\newblock Benchmarking llms via uncertainty quantification.
\newblock \emph{Advances in Neural Information Processing Systems}, 37:15356--15385.

\bibitem[{Yona et~al.(2024)Yona, Aharoni, and Geva}]{yona2024can}
Gal Yona, Roee Aharoni, and Mor Geva. 2024.
\newblock Can large language models faithfully express their intrinsic uncertainty in words?
\newblock In \emph{Proceedings of the 2024 Conference on Empirical Methods in Natural Language Processing}, pages 7752--7764.

\bibitem[{Zhang et~al.(2024)Zhang, Jiang, Xu, Hao, and Wang}]{zhang2024multiple}
Ziyin Zhang, Zhaokun Jiang, Lizhen Xu, Hongkun Hao, and Rui Wang. 2024.
\newblock Multiple-choice questions are efficient and robust llm evaluators.
\newblock \emph{arXiv preprint arXiv:2405.11966}.

\bibitem[{Zhao et~al.(2025)Zhao, Zhang, Xiao, Hou, Guo, Zhang, Hao, and Chen}]{zhao2025objectlevelverbalizedconfidencecalibration}
Yunpu Zhao, Rui Zhang, Junbin Xiao, Ruibo Hou, Jiaming Guo, Zihao Zhang, Yifan Hao, and Yunji Chen. 2025.
\newblock \href {https://arxiv.org/abs/2504.14848} {Object-level verbalized confidence calibration in vision-language models via semantic perturbation}.
\newblock \emph{Preprint}, arXiv:2504.14848.

\bibitem[{Zhi et~al.(2025)Zhi, Feng, Daneshmend, Orlu, Demosthenous, Yin, Li, Liu, and Rodrigues}]{zhi2025seeingreasoningconfidencesupercharging}
Zhuo Zhi, Chen Feng, Adam Daneshmend, Mine Orlu, Andreas Demosthenous, Lu~Yin, Da~Li, Ziquan Liu, and Miguel R.~D. Rodrigues. 2025.
\newblock \href {https://arxiv.org/abs/2503.08308} {Seeing and reasoning with confidence: Supercharging multimodal llms with an uncertainty-aware agentic framework}.
\newblock \emph{Preprint}, arXiv:2503.08308.

\end{thebibliography}

\appendix
\newpage 
\section{Appendix}
\label{sec:appendix}

\subsection{Equivalence of input and FES samples}
\label{sec:equivalence-proof-fes}
\textbf{Proof of Equivalence relation for FES} - 
We show that $\sim_{\mathcal{E}}$ satisfies the three properties of an equivalence relation.

\begin{proposition}
\label{proposition:equivalence}
The relation $\sim_{\mathcal{E}}$ defined over inputs $\mathbf{X} \sim_{\mathcal{E}}$  $\tilde{\mathbf{X}}$  such that $T(\mathbf{X}) = T(\tilde{\mathbf{X}})$ and $M_{\texttt{ideal}}(\mathbf{X}) = M_{\texttt{ideal}}(\tilde{\mathbf{X}})$ is an equivalence relation on the input space.
\end{proposition}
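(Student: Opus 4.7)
The plan is to verify the three defining axioms of an equivalence relation---reflexivity, symmetry, and transitivity---for $\sim_{\mathcal{E}}$. Since $\mathbf{X} \sim_{\mathcal{E}} \tilde{\mathbf{X}}$ is defined as the conjunction of two function-value equalities, namely $T(\mathbf{X}) = T(\tilde{\mathbf{X}})$ and $M_{\texttt{ideal}}(\mathbf{X}) = M_{\texttt{ideal}}(\tilde{\mathbf{X}})$, the strategy is to inherit each axiom directly from the corresponding axiom of equality on the codomains of $T$ and $M_{\texttt{ideal}}$, and then observe that the conjunction of two equivalence relations on a common underlying set is itself an equivalence relation. In other words, $\sim_{\mathcal{E}}$ is the pullback of the equality relation along the pair of maps $(T, M_{\texttt{ideal}})$, and pullbacks of equivalence relations are equivalence relations.

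Concretely, I would proceed in three short steps. For reflexivity, for any input $\mathbf{X}$ one has $T(\mathbf{X}) = T(\mathbf{X})$ and $M_{\texttt{ideal}}(\mathbf{X}) = M_{\texttt{ideal}}(\mathbf{X})$ by the reflexivity of equality, so $\mathbf{X} \sim_{\mathcal{E}} \mathbf{X}$. For symmetry, if $\mathbf{X} \sim_{\mathcal{E}} \tilde{\mathbf{X}}$, then swapping the two sides of each component equality yields $T(\tilde{\mathbf{X}}) = T(\mathbf{X})$ and $M_{\texttt{ideal}}(\tilde{\mathbf{X}}) = M_{\texttt{ideal}}(\mathbf{X})$, hence $\tilde{\mathbf{X}} \sim_{\mathcal{E}} \mathbf{X}$. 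For transitivity, given $\mathbf{X} \sim_{\mathcal{E}} \tilde{\mathbf{X}}$ and $\tilde{\mathbf{X}} \sim_{\mathcal{E}} \tilde{\mathbf{Z}}$, chaining the two $T$-equalities and the two $M_{\texttt{ideal}}$-equalities through the intermediate element $\tilde{\mathbf{X}}$ yields $T(\mathbf{X}) = T(\tilde{\mathbf{Z}})$ and $M_{\texttt{ideal}}(\mathbf{X}) = M_{\texttt{ideal}}(\tilde{\mathbf{Z}})$, establishing $\mathbf{X} \sim_{\mathcal{E}} \tilde{\mathbf{Z}}$.

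The main obstacle is essentially nonexistent: the argument is purely formal once $\sim_{\mathcal{E}}$ is recognized as being induced by two deterministic functions into sets equipped with ordinary equality. The only point worth being explicit about is that $T$ and $M_{\texttt{ideal}}$ must be treated as well-defined, single-valued maps on the input space, so that the component equalities make sense unambiguously; this is consistent with the paper's setup, where $T$ denotes the fixed task an input encodes and $M_{\texttt{ideal}}$ denotes the hypothetical ideal model, each assigning to an input a single response. No properties of the specific transformations $\mathcal{E}_i$ beyond what is already built into the definition of FES are required.
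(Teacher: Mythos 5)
Your proposal is correct and follows essentially the same route as the paper's proof: both verify reflexivity, symmetry, and transitivity by inheriting each property from the corresponding property of equality applied to $T(\cdot)$ and $M_{\texttt{ideal}}(\cdot)$. Your added observation that $\sim_{\mathcal{E}}$ is the pullback of equality along the pair $(T, M_{\texttt{ideal}})$ is a nice, slightly more abstract packaging of the same argument, but the substance is identical.
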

\begin{proof}

\textbf{(Reflexivity):} For any input $\mathbf{X}$, we have
\[
T(\mathbf{X}) = T(\mathbf{X}), \quad M_{\text{ideal}}(\mathbf{X}) = M_{\text{ideal}}(\mathbf{X}),
\]
so $\mathbf{X} \sim_{\mathcal{E}} \mathbf{X}$.

\textbf{(Symmetry):} Suppose $\mathbf{X} \sim_{\mathcal{E}} \tilde{\mathbf{X}}$. Then,
\[
T(\mathbf{X}) = T(\tilde{\mathbf{X}}), \quad M_{\text{ideal}}(\mathbf{X}) = M_{\text{ideal}}(\tilde{\mathbf{X}}).
\]
By symmetry of equality, this implies:
\[
T(\tilde{\mathbf{X}}) = T(\mathbf{X}), \quad M_{\text{ideal}}(\tilde{\mathbf{X}}) = M_{\text{ideal}}(\mathbf{X}),
\]
so $\tilde{\mathbf{X}} \sim_{\mathcal{E}} \mathbf{X}$.

\textbf{(Transitivity):} Suppose $\mathbf{X} \sim_{\mathcal{E}} \tilde{\mathbf{X}}$ and $\tilde{\mathbf{X}} \sim_{\mathcal{E}} \hat{\mathbf{X}}$. Then,
\[
T(\mathbf{X}) = T(\tilde{\mathbf{X}}), \quad M_{\text{ideal}}(\mathbf{X}) = M_{\text{ideal}}(\tilde{\mathbf{X}}),
\]
and
\[
T(\tilde{\mathbf{X}}) = T(\hat{\mathbf{X}}), \quad M_{\text{ideal}}(\tilde{\mathbf{X}}) = M_{\text{ideal}}(\hat{\mathbf{X}}).
\]
By transitivity of equality, we get:
\[
T(\mathbf{X}) = T(\hat{\mathbf{X}}), \quad M_{\text{ideal}}(\mathbf{X}) = M_{\text{ideal}}(\hat{\mathbf{X}}),
\]
so $\mathbf{X} \sim_{\mathcal{E}} \hat{\mathbf{X}}$.

Hence, $\sim_{\mathcal{E}}$ is reflexive, symmetric, and transitive, and thus an equivalence relation.
\end{proof}
\subsection{Equivalence between different FCS samples}
\label{sec:equivalence-proof-fcs}
\begin{proposition}
Let $\mathcal{C}_{\mathbf{X}} := \{ \mathbf{X}' : T(\mathbf{X}') = T(\mathbf{X}), \; M_{\text{ideal}}(\mathbf{X}') \neq M_{\text{ideal}}(\mathbf{X}) \}$ denote the set of functionally complementary samples of input $\mathbf{X}$.

Define a relation $\sim_{\mathcal{C}}$ over this set such that:
\[
\mathbf{X}_1 \sim_{\mathcal{C}} \mathbf{X}_2 \iff T(\mathbf{X}_1) = T(\mathbf{X}_2)
\]
and
\[
 M_{\text{ideal}}(\mathbf{X}_1) = M_{\text{ideal}}(\mathbf{X}_2).
\]
Then \( \sim_{\mathcal{C}} \) is an equivalence relation over the set of functionally complementary samples \( \mathcal{C}_{\mathbf{X}} \).
\end{proposition}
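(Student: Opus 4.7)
The plan is to mirror the argument of Proposition~\ref{proposition:equivalence} almost verbatim, since $\sim_{\mathcal{C}}$ is defined in terms of two equalities, $T(\mathbf{X}_1) = T(\mathbf{X}_2)$ and $M_{\text{ideal}}(\mathbf{X}_1) = M_{\text{ideal}}(\mathbf{X}_2)$, both of which inherit reflexivity, symmetry, and transitivity directly from equality on the codomains of $T$ and $M_{\text{ideal}}$. The only additional bookkeeping is to check that the relation is well-defined on the restricted domain $\mathcal{C}_{\mathbf{X}}$, but this is immediate because the defining clauses only compare the two samples to each other and never reference the anchor input $\mathbf{X}$.

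First I would verify reflexivity: for any $\mathbf{X}_1 \in \mathcal{C}_{\mathbf{X}}$, trivially $T(\mathbf{X}_1) = T(\mathbf{X}_1)$ and $M_{\text{ideal}}(\mathbf{X}_1) = M_{\text{ideal}}(\mathbf{X}_1)$, so $\mathbf{X}_1 \sim_{\mathcal{C}} \mathbf{X}_1$. Next, symmetry follows by swapping the two sides of each equality: if $T(\mathbf{X}_1) = T(\mathbf{X}_2)$ and $M_{\text{ideal}}(\mathbf{X}_1) = M_{\text{ideal}}(\mathbf{X}_2)$, then $T(\mathbf{X}_2) = T(\mathbf{X}_1)$ and $M_{\text{ideal}}(\mathbf{X}_2) = M_{\text{ideal}}(\mathbf{X}_1)$. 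Finally, transitivity follows by chaining: from $\mathbf{X}_1 \sim_{\mathcal{C}} \mathbf{X}_2$ and $\mathbf{X}_2 \sim_{\mathcal{C}} \mathbf{X}_3$, apply transitivity of equality on each coordinate to conclude $T(\mathbf{X}_1) = T(\mathbf{X}_3)$ and $M_{\text{ideal}}(\mathbf{X}_1) = M_{\text{ideal}}(\mathbf{X}_3)$, hence $\mathbf{X}_1 \sim_{\mathcal{C}} \mathbf{X}_3$.

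There is no real obstacle here; the statement is essentially a restriction of Proposition~\ref{proposition:equivalence} to the subset $\mathcal{C}_{\mathbf{X}}$. The one subtlety worth remarking on in a final sentence is that membership in $\mathcal{C}_{\mathbf{X}}$ imposes the extra constraint $M_{\text{ideal}}(\mathbf{X}') \neq M_{\text{ideal}}(\mathbf{X})$, but this is a property of the ambient set rather than of the relation, and it is automatically preserved whenever two elements of $\mathcal{C}_{\mathbf{X}}$ share the same ideal output, since each already satisfies the inequality with respect to $\mathbf{X}$. Thus $\sim_{\mathcal{C}}$ partitions $\mathcal{C}_{\mathbf{X}}$ into equivalence classes indexed by the pair $(T(\mathbf{X}'), M_{\text{ideal}}(\mathbf{X}'))$.
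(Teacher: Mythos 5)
Your proof is correct and matches the paper's own argument essentially verbatim: both verify reflexivity, symmetry, and transitivity of $\sim_{\mathcal{C}}$ by appealing to the corresponding properties of equality applied to $T(\cdot)$ and $M_{\text{ideal}}(\cdot)$. Your closing remark about the equivalence classes being indexed by the pair $(T(\mathbf{X}'), M_{\text{ideal}}(\mathbf{X}'))$ is a small, accurate addition beyond what the paper states, but the core argument is the same.
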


\begin{proof}
We verify the three properties of equivalence:

\textbf{(Reflexivity):}  
For any \( \mathbf{X}_1 \in \mathcal{C}_{\mathbf{X}} \), clearly \( T(\mathbf{X}_1) = T(\mathbf{X}_1) \) and \( M_{\text{ideal}}(\mathbf{X}_1) = M_{\text{ideal}}(\mathbf{X}_1) \), so \( \mathbf{X}_1 \sim_{\mathcal{C}} \mathbf{X}_1 \).

\textbf{(Symmetry):}  
If \( \mathbf{X}_1 \sim_{\mathcal{C}} \mathbf{X}_2 \), then:
\[
T(\mathbf{X}_1) = T(\mathbf{X}_2), \quad M_{\text{ideal}}(\mathbf{X}_1) = M_{\text{ideal}}(\mathbf{X}_2).
\]
By symmetry of equality, the reverse also holds:
\[
T(\mathbf{X}_2) = T(\mathbf{X}_1), \quad M_{\text{ideal}}(\mathbf{X}_2) = M_{\text{ideal}}(\mathbf{X}_1),
\]
so \( \mathbf{X}_2 \sim_{\mathcal{C}} \mathbf{X}_1 \).

\textbf{(Transitivity):}  
If \( \mathbf{X}_1 \sim_{\mathcal{C}} \mathbf{X}_2 \) and \( \mathbf{X}_2 \sim_{\mathcal{C}} \mathbf{X}_3 \), then:
\[
T(\mathbf{X}_1) = T(\mathbf{X}_2) = T(\mathbf{X}_3)
\]
and 
\[
M_{\text{ideal}}(\mathbf{X}_1) = M_{\text{ideal}}(\mathbf{X}_2) = M_{\text{ideal}}(\mathbf{X}_3)
\]
so \( \mathbf{X}_1 \sim_{\mathcal{C}} \mathbf{X}_3 \).

Thus, \( \sim_{\mathcal{C}} \) is reflexive, symmetric, and transitive over \( \mathcal{C}_{\mathbf{X}} \), and therefore an equivalence relation.
\end{proof}
\subsection{FES uncertainty closed-form expression}
\label{sec:fes-uncertainty-closed-form}
\begin{proposition}
Then the KL divergence from the certain model to $q_{\text{FES}}$ simplifies to:
\[
U_{\text{FES}}(M \mid \mathbf{x}) := -\log q_{\text{FES}}(y = \hat{y} \mid \mathbf{x}).
\]
\end{proposition}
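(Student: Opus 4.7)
The plan is to expand the KL divergence directly and exploit the fact that $q_{\texttt{cons.}}(\y \mid \x)$ is a point mass. Starting from
\begin{equation*}
U_{\text{FES}}(M \mid \x) = D_{KL}\bigl(q_{\texttt{cons.}}(\y \mid \x) \,\|\, q_{\text{FES}}(\y \mid \x)\bigr) = \sum_{\y \in \mathcal{S}} q_{\texttt{cons.}}(\y \mid \x) \log \frac{q_{\texttt{cons.}}(\y \mid \x)}{q_{\text{FES}}(\y \mid \x)},
\end{equation*}
I would substitute the definition $q_{\texttt{cons.}}(\y \mid \x) = \delta_{\y\hat{\y}}$ (which follows from the fact that an ideally consistent model places all its mass on $\hat{\y}$ across every FES draw, and hence also after taking the expectation over $\mathcal{P}_{\mathrm{FES}}$).

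Under the delta, every term in the sum with $\y \neq \hat{\y}$ has $q_{\texttt{cons.}}(\y \mid \x) = 0$ and contributes $0$ to the KL by the standard convention $0 \log 0 = 0$. The only surviving term is $\y = \hat{\y}$, which contributes
\begin{equation*}
1 \cdot \log \frac{1}{q_{\text{FES}}(\hat{\y} \mid \x)} = -\log q_{\text{FES}}(\hat{\y} \mid \x),
\end{equation*}
yielding the claimed closed form.

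The only subtlety worth spelling out, rather than an obstacle, is well-definedness: the KL is finite precisely when $q_{\text{FES}}(\hat{\y} \mid \x) > 0$. This is automatic because the original input $\x$ itself lies in the support of $\mathcal{P}_{\mathrm{FES}}(\tilde{\x} \mid \x)$ (by reflexivity of $\sim_{\mathcal{E}}$ shown in Proposition~\ref{proposition:equivalence}), and $\hat{\y} = \arg\max_{\y} q(\y \mid \x)$ has strictly positive probability under $q(\cdot \mid \x)$. Hence the expectation $q_{\text{FES}}(\hat{\y} \mid \x) = \mathbb{E}_{\tilde{\x}} q(\hat{\y} \mid \tilde{\x}, \x)$ is bounded away from zero, and the simplification is valid. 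An analogous argument will then carry over to the FCS proposition by replacing the delta mass at $\hat{\y}$ with a delta mass at $\hat{\y}^c$ on the restricted support $\mathcal{S}'$.
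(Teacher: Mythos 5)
Your proof is correct and follows essentially the same route as the paper's: substitute the Kronecker delta $q_{\texttt{cons.}}(\y\mid\x)=\delta_{\y\hat{\y}}$ into the KL sum, discard the vanishing terms, and read off $-\log q_{\text{FES}}(\y=\hat{\y}\mid\x)$. The added remark on well-definedness via positivity of $q_{\text{FES}}(\hat{\y}\mid\x)$ is a reasonable bonus that the paper omits, but it does not change the argument.
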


\begin{proof}
Recall that KL divergence between distributions $q_{\text{certain}}(y|x)$ and $q_{\text{FES}}(y|x)$ is defined as:
\[
\begin{aligned}
D_{\text{KL}}(q_{\text{certain}}(y|\x) \| q_{\text{FES}}(y|\x)) =\\ \sum_{y} q_{\text{certain}}(y|\x) \log \frac{q_{\text{certain}}(y|\x)}{q_{\text{FES}}(y|\x)}
\end{aligned}
\]

Substituting \( q_{\text{certain}}(y|\x) = \delta_{y, \hat{y}} \), we have:
\begin{align*}
D_{\text{KL}}\left( \delta_{y, \hat{y}} \,\|\, q_{\text{FES}}(y \mid \mathbf{x}) \right)
&= \sum_{y} \delta_{y, \hat{y}} \log \frac{\delta_{y, \hat{y}}}{q_{\text{FES}}(y \mid \mathbf{x})} \\
&= \log \frac{1}{q_{\text{FES}}(y = \hat{y} \mid \mathbf{x})} \\
&= -\log q_{\text{FES}}(y = \hat{y} \mid \mathbf{x})
\end{align*}

\end{proof}
\subsection{FCS for low-uncertainty hallucinations}
\label{sec:fcs-low-uncertainty-hallucinations}
We focus on he functional complementary sampling. We show that a hallucinating model has tendency to not react to the complementary transformations of the input. The proof sketch is provided below, for a single attention head.
\begin{theorem}[Negation Invariance in Hallucinating Attention Blocks]
Consider a single-head attention block over input $\mathbf{X} = [\mathbf{X}_{\mathcal{D}}, \mathbf{X}_{\mathcal{N}}]$ with attention weights
\[
\alpha_{ij} := \mathrm{softmax}_j\left( \frac{Q_i^\top K_j}{\sqrt{d_k}} \right), \quad \mathrm{Attn}_i := \sum_j \alpha_{ij} V_j.
\]

Let negation act as $\mathbf{X}_{\mathcal{D}}^{\mathrm{neg}} := f_{\mathrm{neg}}(\mathbf{X}_{\mathcal{D}}), \mathbf{X}_{\mathcal{N}}^{\mathrm{neg}} := \mathbf{X}_{\mathcal{N}}$.

Under two hallucination models:
\begin{itemize}
    \item[\textbf{(1)}] \textbf{Missing attention:} $\sum_{j \in \mathcal{D}} \alpha_{ij} \ll \sum_{j \in \mathcal{N}} \alpha_{ij}$,
    \item[\textbf{(2)}] \textbf{Over-reliance on prior:} $K_j \approx K^*, V_j \approx V^*$, independent of $\mathbf{X}$,
\end{itemize}
the attention output satisfies:
\[
\mathrm{Attn}_i^{\mathrm{neg}} \approx \mathrm{Attn}_i^{\mathrm{orig}},
\]
and thus the prediction remains unchanged under negation.
\end{theorem}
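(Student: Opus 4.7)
The plan is to handle the two hallucination regimes separately and bound the perturbation of $\mathrm{Attn}_i$ under negation in each case. I would begin by decomposing the attention sum as $\mathrm{Attn}_i = \sum_{j\in\mathcal{D}} \alpha_{ij} V_j + \sum_{j\in\mathcal{N}} \alpha_{ij} V_j$ and tracking which terms can change when $\mathbf{X}_{\mathcal{D}}$ is replaced by $f_{\mathrm{neg}}(\mathbf{X}_{\mathcal{D}})$. Since the projections $Q, K, V$ are linear maps of the inputs and $\mathbf{X}_{\mathcal{N}}^{\mathrm{neg}} = \mathbf{X}_{\mathcal{N}}$, only the keys/values indexed by $\mathcal{D}$ are altered directly; however, the softmax denominator mixes contributions from all $j$, so $\alpha_{ij}$ for $j\in\mathcal{N}$ can shift indirectly through renormalization.

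For the missing-attention regime, I would set $\epsilon := \sum_{j\in\mathcal{D}} \alpha_{ij} \ll 1$ and argue by a short softmax perturbation estimate. Writing $\alpha_{ij}^{\mathrm{neg}} = \alpha_{ij} \cdot (Z/Z^{\mathrm{neg}})$ for $j\in\mathcal{N}$, where $Z$ and $Z^{\mathrm{neg}}$ are the partition functions before and after negation, the change in $Z$ comes solely from the $\mathcal{D}$-logits and is bounded by the total mass these logits already contribute, namely $O(\epsilon Z)$. This gives $|\alpha_{ij}^{\mathrm{neg}} - \alpha_{ij}| = O(\epsilon \alpha_{ij})$ for $j\in\mathcal{N}$, while the $\mathcal{D}$-block of the attention sum changes by at most $2\epsilon \max_j \|V_j\|$ by the triangle inequality. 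Composing these bounds gives $\|\mathrm{Attn}_i^{\mathrm{neg}} - \mathrm{Attn}_i^{\mathrm{orig}}\| = O(\epsilon)$, i.e., approximate invariance under negation.

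For the over-reliance regime, I would substitute $K_j \approx K^\ast$ and $V_j \approx V^\ast$ (independent of the input) directly into the attention formula. Then $Q_i^\top K_j$ is approximately constant in $j$, so the softmax is approximately uniform, and $\mathrm{Attn}_i \approx V^\ast$ both before and after negation; the triangle inequality closes the argument. Finally, lifting from a single head to the full attention block follows because the output projection and residual connection are linear, so per-head approximate equality transfers to the block output and hence, after the final readout, to the predicted token.

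The main obstacle I anticipate is making the softmax perturbation in the first case quantitatively clean without assuming regularity of $f_{\mathrm{neg}}$. Since $f_{\mathrm{neg}}$ is not posited to be Lipschitz, the bound must come purely from the smallness of the $\mathcal{D}$-weights rather than from the magnitude of the key/value perturbation: no matter how wildly $K_j, V_j$ for $j\in\mathcal{D}$ move, their aggregate influence on $\mathrm{Attn}_i$ is gated by $\epsilon$. The cleanest way to formalize this is to rewrite the softmax in ratio form and bound $|Z/Z^{\mathrm{neg}} - 1|$ directly in terms of $\epsilon$, which sidesteps the need for smoothness assumptions on $f_{\mathrm{neg}}$ and yields the stated approximation.
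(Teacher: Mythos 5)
For context: the paper itself never actually supplies a proof of this theorem --- the text promises a ``proof sketch'' and then the theorem environment is followed immediately by the next subsection, so the two hallucination conditions in the statement are all the paper offers. Your writeup is therefore already more detailed than anything in the paper, and your treatment of the over-reliance regime (2) is fine: if $K_j \approx K^\ast$ and $V_j \approx V^\ast$ independently of $\mathbf{X}$, the softmax is approximately uniform and $\mathrm{Attn}_i \approx V^\ast$ on both inputs, so the conclusion is immediate.

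However, your argument for the missing-attention regime (1) has a genuine gap, precisely at the step you flag as the ``cleanest way to formalize this.'' Condition (1) constrains only the \emph{original} attention weights: $\epsilon = \sum_{j\in\mathcal{D}}\alpha_{ij} \ll 1$ says the original $\mathcal{D}$-logits contribute mass $\epsilon Z$ to the partition function. After negation the keys $K_j$ for $j\in\mathcal{D}$ are replaced by keys computed from $f_{\mathrm{neg}}(\mathbf{X}_{\mathcal{D}})$, and nothing in the hypothesis bounds the new logits $Q_i^\top K_j^{\mathrm{neg}}$. Since the influence of a token on $\mathrm{Attn}_i$ is gated by $e^{\ell_j^{\mathrm{neg}}}/Z^{\mathrm{neg}}$ and the exponential is unbounded, a large move in $K_j$ can make $\sum_{j\in\mathcal{D}} e^{\ell_j^{\mathrm{neg}}}$ dominate $Z^{\mathrm{neg}}$, driving the negated attention mass on $\mathcal{D}$ toward $1$ even though the original mass was $\epsilon$. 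So your claim that ``no matter how wildly $K_j, V_j$ move, their aggregate influence is gated by $\epsilon$'' is false as stated, and the bound $|Z/Z^{\mathrm{neg}} - 1| = O(\epsilon)$ does not follow. To close the argument you must either (a) read condition (1) as a property of the model's behavior on this \emph{class} of inputs, i.e.\ assume $\sum_{j\in\mathcal{D}}\alpha_{ij}^{\mathrm{neg}} \ll 1$ as well (arguably the intended meaning of ``missing attention,'' but it should be stated), or (b) add a boundedness or Lipschitz assumption on the logits under $f_{\mathrm{neg}}$. A secondary point you should make explicit: you assert that only the $\mathcal{D}$-indexed keys and values change, which requires the query position $i$ to lie in $\mathcal{N}$ (or the readout position); if $i\in\mathcal{D}$ then $Q_i$ changes too and every logit moves.
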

\subsection{Details of FES and FCS}
\label{sec:fes-fcs-details}
The generic functionally equivalent transforms applied to image data include:
\begin{itemize}[itemsep=0pt, topsep=0pt]
    \item \textbf{Contrast:} Adjusts image contrast to simulate lighting variation.
    \item \textbf{Blur:} Applies slight blurring to the image
    \item \textbf{Noise:} Adding a small amount of pixel-level noise
    \item \textbf{Masking:} Hides small number of random pixels
    \item \textbf{Rotate:} Rotates the image slightly to simulate viewpoint changes.
    \item \textbf{Shift:} Translates the image slightly in space.
    \item \textbf{Greyscale:} Removes color information while preserving structure.
\end{itemize}
The generic functionally equivalent transforms applied to audio data include:
\begin{itemize}[itemsep=0pt, topsep=2pt]
    \item \textbf{Silence:} Adding small duration of silence in between sound events
    \item \textbf{Volume:} Minor adjustment to the volume of different sound events 
\end{itemize} 
The generic functionally equivalent transforms applied to text data include:
\begin{itemize}[itemsep=0pt, topsep=2pt]
    \item \textbf{Rephrase:} Paraphrasing the question such that the meaning remains unchanged
\end{itemize}
Functionally Complementary Transformation for image-text datasets is done by negating the textual question such that the answer changes.

Functionally Complementary Transformation for audio-text datasets is done by negating the audio such that the answer changes. This is task-specific.
\begin{itemize}[itemsep=0pt, topsep=2pt]
    \item \textbf{Count:} Adding new sound events to the original audio
    \item \textbf{Duration:} Replace the longest or shortest sound event in the audio with a sound event not originally present.
    \item \textbf{Order:} Swap the positions of the sound events in the audio
\end{itemize} 
Examples of Functionally Equivalent Transform and Functionally Complementary Transform for both audio and image are given in \ref{fig:fes_fcs_examples}.
\begin{table*}[t!]
\centering
\caption{Summary of Notations Used in FESTA}
\label{tab:notations}
\resizebox{0.65\textwidth}{!}{%
\begin{tabular}{ll}
\toprule
\textbf{Notation} & \textbf{Description} \\
\midrule
$X = [X_O, X_T]$ & Multimodal input (Non-text and text modalities) \\
$Y$ & Ground truth \\
$T(X)$ & Task to be performed to answer $X$ \\
$q(y|X)$ & Predictive distribution of model outputs given $X$ \\
$\hat{y}$ & Model's predicted output, $\hat{y} = \arg\max_{y \in Y} q(y|X)$ \\
$\tilde{X} \sim P_{\text{FES}}(\tilde{X}|X)$ & Functionally equivalent samples (FES) from $X$ \\
$X' \sim P_{\text{FCS}}(X'|X)$ & Functionally complementary samples (FCS) from $X$ \\
$\mathcal{M}_{\text{ideal}}$ & Ideal model achieving task objective perfectly \\
$\mathcal{M}_{\text{consistent}}$ & A perfectly consistent model under FES\\
$\mathcal{M}_{\text{sensitive}}$ & A perfectly sensitive model under FCS \\
$q_{\text{FES}}(y|X)$ & Predictive distribution over FES samples \\
$q_{\text{FCS}}(y|X)$ & Predictive distribution over FCS samples \\
$U_{\text{FES}}$ & Uncertainty estimate from FES \\
$U_{\text{FCS}}$ & Uncertainty estimate from FCS \\
$U_{\text{FESTA}}$ & Combined FESTA uncertainty estimate \\
$S$ & Finite set of output sequences \\
$S'$ & Modified predictive support ($\{\hat{y}, \hat{y}^c\}$) \\
\bottomrule
\end{tabular}%
}
\label{tab:notations-table}
\end{table*}
\subsection{Image-based FCS samples}
\label{appendix:image-fcs-experiment}
\blue{We considered the complementary operation on images instead of on text, for the 'left-of' and 'right-of' questions (20 image-text pairs). The Table~\ref{tab:roc_auc_image_fcs} contains the results on 3 models and on the BLINK dataset. It is interesting to note that image-based FCS may potentially offer further performance gains. Similar experiments on other models and on the VSR dataset will be taken up in future.}
\begin{table}[t!]
\centering
\resizebox{0.45\textwidth}{!}{%
\begin{tabular}{lcc}
\toprule
\textbf{Model} & \textbf{ROC-AUC (text FCS)} & \textbf{ROC-AUC (image FCS)} \\
\midrule
Qwen-2.5VL    & 0.59 & 0.91 \\
LLaVA-1.6  & 0.66 & 0.57 \\
Phi-4     & 0.92 & 0.98 \\
\midrule
\textbf{Mean} & \textbf{0.72} & \textbf{0.82} \\
\bottomrule
\end{tabular}
}
\caption{\blue{Comparison of the performance of image complementary transform-based FCS samples vs text-based FCS samples, from the BLINK dataset.}}
\label{tab:roc_auc_image_fcs}
\end{table}
\subsection{Semantic entropy in MCQA setting}
\label{appendix:semantic-entropy-mcq}
\blue{Semantic Entropy (SE) \cite{kuhn2023semantic, farquhar2024detecting}
for an input $x$ is defined over semantic clusters $c\in\mathcal{C}$ of
free-text outputs $s$ as
\begin{align*}
SE(x)
&= - \sum_{c\in\mathcal{C}} p(c\mid x)\,\log p(c\mid x) \\
&= - \sum_{c\in\mathcal{C}}
\Bigg(\sum_{s\in c} p(s\mid x)\Bigg)
\log\Bigg(\sum_{s\in c} p(s\mid x)\Bigg),
\end{align*}
where (i) $s$ are generated sequences (answers), (ii) $c$ denotes a semantic
cluster of sequences with the same meaning, and (iii) $p(s\mid x)$ is the
model probability of sequence $s$ given $x$.
(All logarithms are natural.)}

\paragraph{MCQ setting:}
\blue{
Suppose there are exactly $K$ fixed answer choices $a_1,\dots,a_K$.
The standard (Shannon) entropy of the predictive distribution over these
choices is}
\[
H(x)\;=\;-\sum_{i=1}^{K} p(a_i\mid x)\,\log p(a_i\mid x).
\]

\blue{In the MCQ scenario, each answer choice is semantically distinct and
explicitly enumerated. Consequently:
\begin{itemize}
  \item The number of semantic clusters equals the number of choices:
        $|\mathcal{C}|=K$.
  \item There is a one-to-one correspondence $c_i \leftrightarrow a_i$.
  \item Hence, for all $i\in\{1,\dots,K\}$, \(p(c_i\mid x)=p(a_i\mid x)\).
\end{itemize}}

\blue{Substituting this identification into the definition of semantic entropy,}
\begin{align*}
SE(x)
&= - \sum_{c\in\mathcal{C}} p(c\mid x)\,\log p(c\mid x) \\
&= - \sum_{i=1}^{K} p(a_i\mid x)\,\log p(a_i\mid x) \\
&= H(x).
\end{align*}

\noindent\textbf{Conclusion:} 
\blue{
In the MCQ setting with fixed choices, semantic
entropy reduces exactly to standard entropy.}
\subsection{Hyperparameters}
FESTA has the minimal number of hyperparameters- only the number of samples to be used. This makes it easily deployable and devoid of heavy tuning.
We have used the same number of equivalent ($K_1$) and complementary ($K_2$) samples ($K = K_1 = K_2$).\\
\textbf{Vision-LLMs}:
For vision LLMs, $K_1 = K_2 = 56$ is used. Within modalities, for each multimodal data point, $14$ image samplings ($K_{11} = 14$) and $4$ text samplings are used ($K_{12} = 4$).
\\
\textbf{Audio-LLMs}:
For vision LLMs, $K_1 = K_2 = 60$ is used. Within modalities, for each multimodal data point, $15$ audio samplings ($K_{11} = 15$) and $4$ text samplings are used ($K_{12} = 4$).
\subsection{Notations}
The symbols and their meanings are noted in Table~\ref{tab:notations}.

\subsection{Model details and License}
\label{appendix:model-cards}
The models below are used as per the suggested guidelines and only for research purposes.\\
\textbf{Gemma-3}: The $12B$ model is used from \url{https://huggingface.co/google/gemma-3-12b-it} with license~\footnote{\url{https://ai.google.dev/gemma/terms}}.
\\
\textbf{LLaVa-1.6}: The $7B$ model is used from \url{https://huggingface.co/llava-hf/llava-v1.6-mistral-7b-hf} with license~\footnote{\url{http://www.apache.org/licenses/LICENSE-2.0}}.
\\
\textbf{Phi-4}: The $5.6B$ model is used from \url{https://huggingface.co/microsoft/Phi-4-multimodal-instruct} with license~\footnote{\url{https://huggingface.co/microsoft/Phi-4-multimodal-instruct/resolve/main/LICENSE}}.
\\
\textbf{Pixtral}: The $12B$ model is used from \url{https://huggingface.co/mistralai/Pixtral-12B-2409} with license~\footnote{\url{http://www.apache.org/licenses/LICENSE-2.0}}.
\\
\textbf{Qwen-2.5VL}: The $7B$ model is used from \url{https://huggingface.co/Qwen/Qwen2.5-VL-7B-Instruct} with license~\footnote{ \url{http://www.apache.org/licenses/LICENSE-2.0}}.
\\
\textbf{Qwen2-Audio}: The $7B$ model is used from \url{https://huggingface.co/Qwen/Qwen2-Audio-7B-Instruct} with license~\footnote{\url{http://www.apache.org/licenses/LICENSE-2.0}}.
\\
\textbf{SALMONN}: The $13B$ model is used from \url{https://github.com/bytedance/SALMONN} with license~\footnote{\url{http://www.apache.org/licenses/LICENSE-2.0}}.
\\
\textbf{Qwen-2}: The $7B$ text-only LLM in Table~\ref{tab:alm-results-temporal-reasoning} is used from \url{https://huggingface.co/Qwen/Qwen2-7B-Instruct} with license~\footnote{\url{http://www.apache.org/licenses/LICENSE-2.0}}.
\subsection{Computation budget and hardware}
We have used $8$ Nvidia RTX A6000 GPU cards for all our experiments.
\subsection{Number of Equivalent and complementary Samples}
The results for varying the number of samples $K$ are given in Tables~\ref{tab:auc-sample-scaling-blink},~\ref{tab:auc-sample-scaling-trea-count},~\ref{tab:auc-sample-scaling-trea-duration},~\ref{tab:auc-sample-scaling-trea-order},and ~\ref{tab:auc-sample-scaling-vsr}. 
\subsection{Usage of AI assitants}
We have used ChatGPT, only restricted to section summarization and for textual rewriting of parts of the paper.
\begin{table*}[h!]
\centering
\begin{tabular}{lrrrrrrrrrrrrrr}
\toprule
\textbf{Sample\_Size} &   \textbf{2*4}  &   \textbf{2*8}  &  \textbf{2*12} &  \textbf{2*16} &  \textbf{2*20} &  \textbf{2*24} &  \textbf{2*28} &  \textbf{2*32} &  \textbf{2*36} &  \textbf{2*40} &  \textbf{2*44} &  \textbf{2*48} &  \textbf{2*52} &  \textbf{2*56} \\
\midrule
\textbf{Gemma-3}  & 0.79 & 0.82 & 0.81 & 0.81 & 0.81 & 0.82 & 0.81 & 0.81 & 0.81 & 0.81 & 0.81 & 0.81 & 0.81 & 0.81 \\
\textbf{LLaVa}   & 0.76 & 0.77 & 0.74 & 0.76 & 0.78 & 0.76 & 0.76 & 0.76 & 0.77 & 0.77 & 0.77 & 0.77 & 0.77 & 0.77 \\
\textbf{Phi4}    & 0.85 & 0.86 & 0.87 & 0.87 & 0.87 & 0.87 & 0.87 & 0.87 & 0.87 & 0.87 & 0.87 & 0.87 & 0.87 & 0.87 \\
\textbf{Pixtral} & 0.90 & 0.90 & 0.89 & 0.90 & 0.90 & 0.91 & 0.90 & 0.91 & 0.90 & 0.90 & 0.90 & 0.90 & 0.90 & 0.90 \\
\textbf{Qwen-2.5-VL}  & 0.92 & 0.92 & 0.92 & 0.92 & 0.93 & 0.93 & 0.92 & 0.93 & 0.92 & 0.93 & 0.92 & 0.93 & 0.93 & 0.93 \\
\bottomrule
\end{tabular}
\caption{AUC performance of different models across increasing sample sizes on BLINK dataset.}
\label{tab:auc-sample-scaling-blink}
\end{table*}

\begin{table*}[t!]
\centering
\begin{tabular}{lrrrrrrrrrrrrrr}
\toprule
\textbf{Sample\_Size} &   \textbf{2*4}  &   \textbf{2*8}  &  \textbf{2*12} &  \textbf{2*16} &  \textbf{2*20} &  \textbf{2*24} &  \textbf{2*28} &  \textbf{2*32} &  \textbf{2*36} &  \textbf{2*40} &  \textbf{2*44} &  \textbf{2*48} &  \textbf{2*52} &  \textbf{2*56} \\
\midrule
\textbf{Gemma-3}  & 0.85 & 0.87 & 0.88 & 0.89 & 0.88 & 0.88 & 0.88 & 0.88 & 0.88 & 0.88 & 0.88 & 0.88 & 0.88 & 0.88 \\
\textbf{LLaVa}   & 0.67 & 0.72 & 0.69 & 0.72 & 0.72 & 0.70 & 0.73 & 0.74 & 0.73 & 0.75 & 0.76 & 0.74 & 0.73 & 0.74 \\
\textbf{Phi4}    & 0.93 & 0.94 & 0.93 & 0.94 & 0.94 & 0.93 & 0.94 & 0.94 & 0.94 & 0.94 & 0.94 & 0.94 & 0.94 & 0.94 \\
\textbf{Pixtral} & 0.88 & 0.89 & 0.91 & 0.90 & 0.91 & 0.90 & 0.90 & 0.90 & 0.90 & 0.91 & 0.91 & 0.91 & 0.91 & 0.91 \\
\textbf{Qwen-2.5-VL}  & 0.91 & 0.88 & 0.93 & 0.92 & 0.92 & 0.92 & 0.91 & 0.93 & 0.91 & 0.91 & 0.92 & 0.92 & 0.92 & 0.92 \\

\bottomrule
\end{tabular}
\caption{AUC performance of different models across increasing sample sizes on VSR dataset.}
\label{tab:auc-sample-scaling-vsr}
\end{table*}

 
\begin{table*}[t!]
\centering
\begin{tabular}{lrrrrrrrrrrrr}
\toprule
\textbf{Sample\_Size} & \textbf{2*5} & \textbf{2*10} & \textbf{2*15} & \textbf{2*20} & \textbf{2*25} & \textbf{2*30} & \textbf{2*35} & \textbf{2*40} & \textbf{2*45} & \textbf{2*50} & \textbf{2*55} & \textbf{2*60} \\
\midrule
\textbf{Qwen2-audio}    & 0.81 & 0.84 & 0.83 & 0.81 & 0.83 & 0.83 & 0.84 & 0.84 & 0.83 & 0.83 & 0.84 & 0.83 \\
\textbf{SALMONN}   & 0.55 & 0.65 & 0.59 & 0.65 & 0.64 & 0.64 & 0.64 & 0.65 & 0.65 & 0.63 & 0.67 & 0.66 \\
\textbf{SALMONN desc+LLM} & 0.80 & 0.81 & 0.80 & 0.80 & 0.80 & 0.82 & 0.81 & 0.81 & 0.81 & 0.81 & 0.81 & 0.81 \\
\bottomrule
\end{tabular}
\caption{AUC performance for the Audio Event Counting task.}
\label{tab:auc-sample-scaling-trea-count}
\end{table*}

\begin{table*}[t!]
\centering
\begin{tabular}{lrrrrrrrrrrrr}
\toprule
\textbf{Sample\_Size} & \textbf{2*5} & \textbf{2*10} & \textbf{2*15} & \textbf{2*20} & \textbf{2*25} & \textbf{2*30} & \textbf{2*35} & \textbf{2*40} & \textbf{2*45} & \textbf{2*50} & \textbf{2*55} & \textbf{2*60} \\
\midrule
\textbf{Qwen2-audio}  &  0.73 & 0.75 & 0.77 & 0.75 & 0.75 & 0.75 & 0.75 & 0.75 & 0.75 & 0.75 & 0.75 & 0.75 \\
\textbf{SALMONN}   & 0.74 & 0.72 & 0.76 & 0.79 & 0.74 & 0.76 & 0.77 & 0.75 & 0.74 & 0.77 & 0.77 & 0.76 \\
\textbf{SALMONN desc+LLM} & 0.79 & 0.80 & 0.80 & 0.80 & 0.79 & 0.80 & 0.79 & 0.79 & 0.79 & 0.79 & 0.79 & 0.80 \\
\bottomrule
\end{tabular}
\caption{AUC performance for the Duration task.}
\label{tab:auc-sample-scaling-trea-duration}
\end{table*}

\begin{table*}[ht]
\centering
\begin{tabular}{lrrrrrrrrrrrr}
\toprule
\textbf{Sample\_Size} & \textbf{2*5} & \textbf{2*10} & \textbf{2*15} & \textbf{2*20} & \textbf{2*25} & \textbf{2*30} & \textbf{2*35} & \textbf{2*40} & \textbf{2*45} & \textbf{2*50} & \textbf{2*55} & \textbf{2*60} \\
\midrule
\textbf{Qwen2-audio}  & 0.91 & 0.92 & 0.91 & 0.91 & 0.91 & 0.91 & 0.91 & 0.91 & 0.91 & 0.91 & 0.91 & 0.91 \\
\textbf{SALMONN}   & 0.81 & 0.83 & 0.83 & 0.86 & 0.84 & 0.86 & 0.84 & 0.85 & 0.86 & 0.86 & 0.86 & 0.86 \\
\textbf{SALMONN desc+LLM} & 0.89 & 0.90 & 0.90 & 0.90 & 0.90 & 0.90 & 0.90 & 0.90 & 0.90 & 0.90 & 0.90 & 0.90 \\
\bottomrule
\end{tabular}
\caption{AUC performance for the Ordering task.}
\label{tab:auc-sample-scaling-trea-order}
\end{table*}

\begin{figure*}[t]
    \centering
    \includegraphics[width=1.0\linewidth, height=0.5\textwidth]{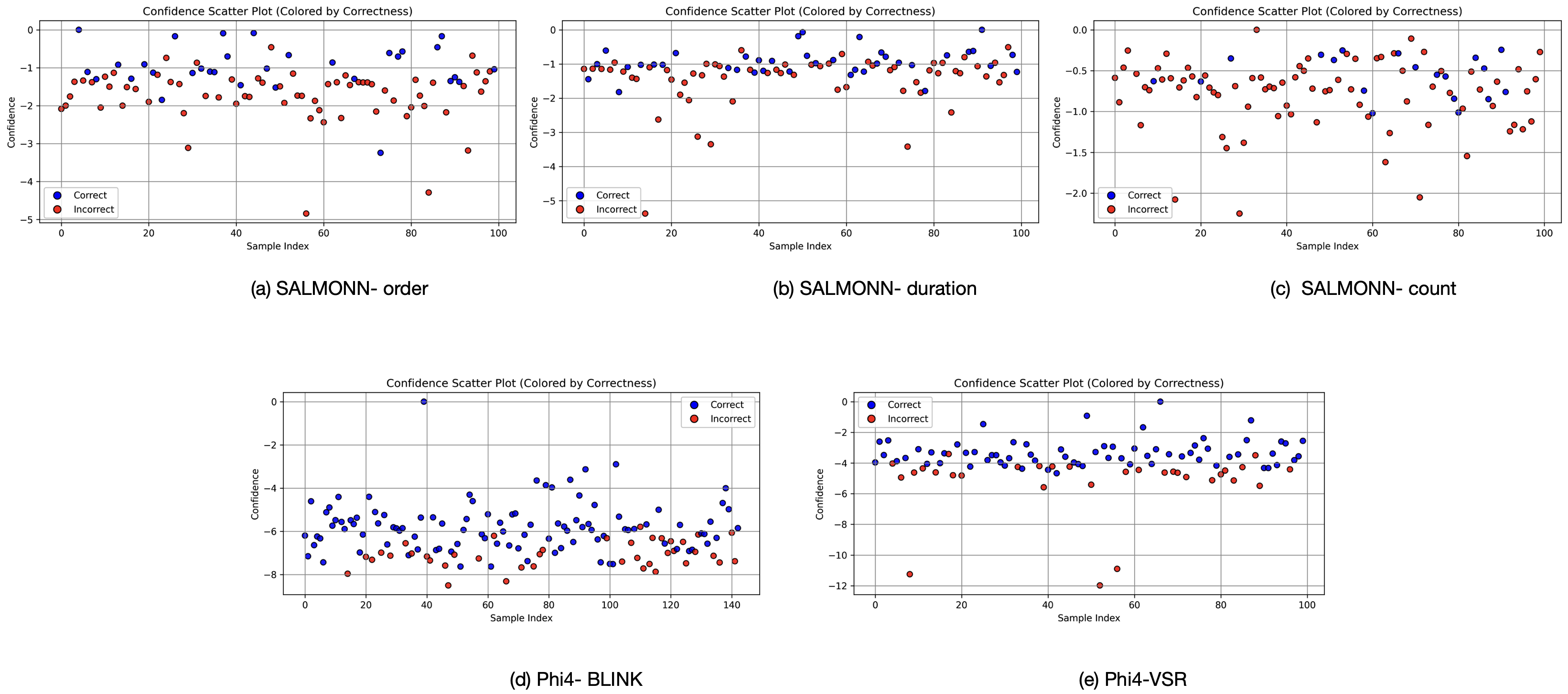}
    \caption{FESTA log(score) plots for best improvement models where score is reciprocal of FESTA uncertainty.
    }
    \label{fig:scatter_plots_festa}
\end{figure*}
\begin{figure*}[t]
    \centering
    \includegraphics[width=1.0\linewidth, height=0.45\textwidth]{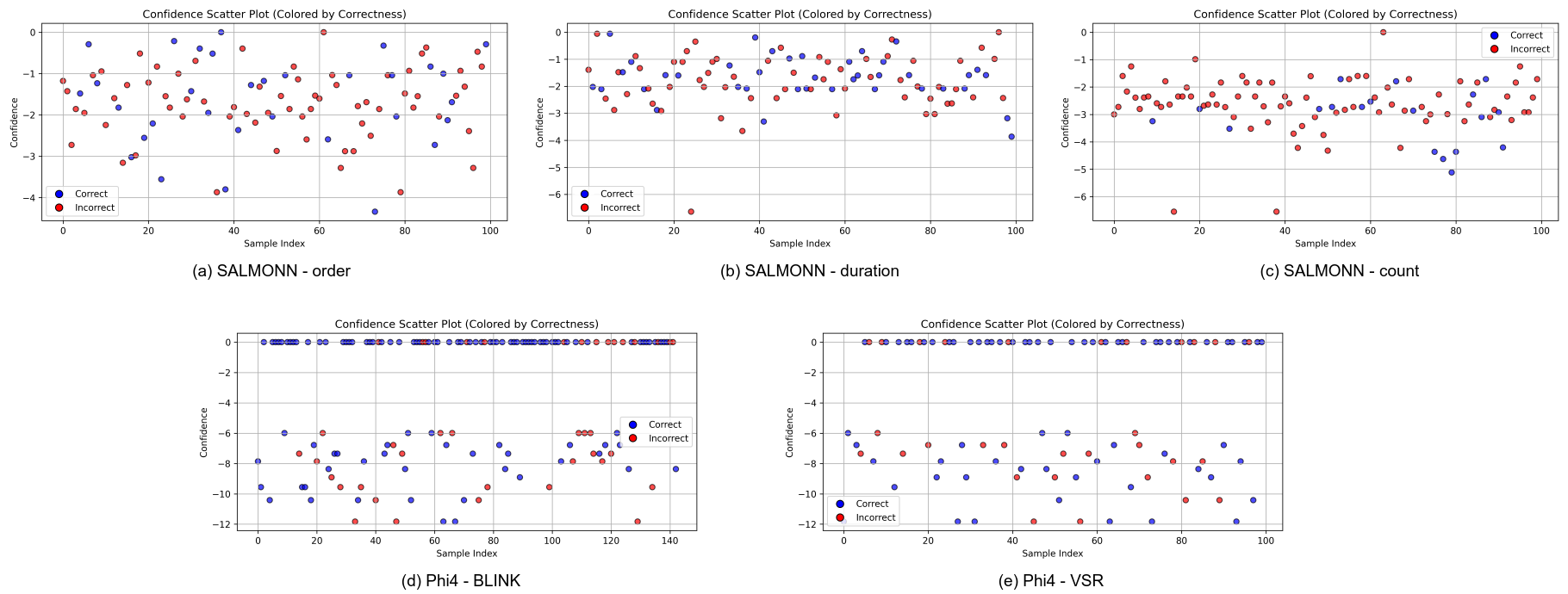}
    \caption{FESTA log(score) plots for output sampling baseline where score is reciprocal of FESTA uncertainty.
    }
    \label{fig:scatter_plots_oe}
\end{figure*}
\begin{table*}[h!]
\centering
\resizebox{0.7\linewidth}{!}{
\def\arraystretch{1.3}
\begin{tabular}{lcccccc}
\toprule
\textbf{Model} & 
\multicolumn{2}{c}{\textbf{TREA-O}} & 
\multicolumn{2}{c}{\textbf{TREA-D}} & 
\multicolumn{2}{c}{\textbf{TREA-C}} \\
\cmidrule(lr){2-3} \cmidrule(lr){4-5} \cmidrule(lr){6-7}
 & Entropy & KL-div & Entropy & KL-div & Entropy & KL-div \\
\midrule
Qwen2-Audio     & 0.59  &  0.91 &  0.67 &  0.75 & 0.38  & 0.83  \\
SALMONN         &  0.58 &  0.86 & 0.60  &  0.76 & 0.27  & 0.66  \\
SAL. des+LLM    & 0.76  & 0.90  &  0.73 & 0.80  & 0.63  & 0.81  \\
\midrule
\textbf{Avg.}   & 0.64  &  0.89 (\textcolor{ForestGreen}{39.1\%}) &  0.67 &  0.77 (\textcolor{ForestGreen}{14.9\%}) &  0.43 &  0.77 (\textcolor{ForestGreen}{79.1\%}) \\
\bottomrule
\end{tabular}
}
\caption{Average performance of audio-LLMs using standard entropy measure compared with the proposed KL-div based measure.}
\label{tab:kl-vs-entropy-audio}
\end{table*}

\begin{table*}[h!]
\centering
\resizebox{0.7\linewidth}{!}{
\def\arraystretch{1.3}
\begin{tabular}{lcccccc}
\toprule
\textbf{Dataset} & \textbf{Gemma3} & \textbf{LLaVA1.6} & \textbf{Qwen2.5VL} & \textbf{Phi4} & \textbf{Pixtral} & \textbf{Avg.} \\
\midrule
BLINK (Entropy)    &    0.57    &     0.66    &      0.79     &    0.65   &      0.73   &    0.68   \\
BLINK (KL-div)   &    0.81    &     0.77    &      0.93     &     0.87  &     0.90    &   0.86 (\textcolor{ForestGreen}{26.5\%})   \\
\midrule
VSR (Entropy)  &    0.61    &     0.55    &     0.56      &     0.41  &     0.58    &    0.54   \\
VSR (KL-div) &    0.88    &    0.74     &     0.92      &  0.94     &    0.91     &    0.88 (\textcolor{ForestGreen}{63.0\%})   \\
\bottomrule
\end{tabular}
}
\caption{Average performance of vision-LLMs using standard entropy measure compared with the proposed KL-div based measure.}
\label{tab:kl-vs-entropy-vision}
\end{table*}
\begin{figure*}[t]
    \centering
    \includegraphics[width=0.8\linewidth, height=1.0\textwidth]{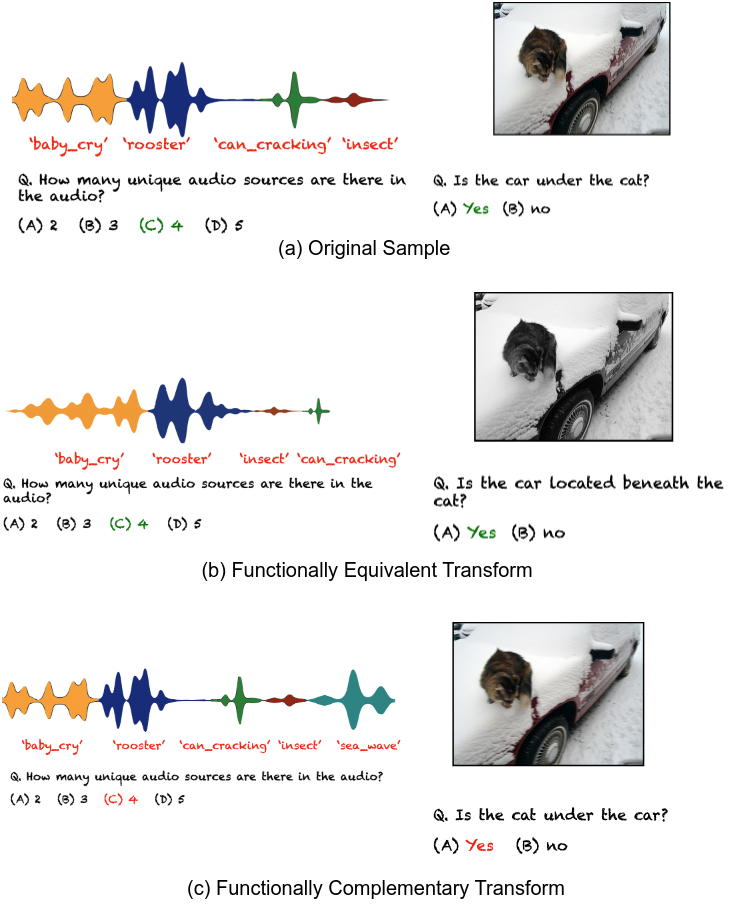}
    \caption{Examples of Functionally Equivalent Transform and Functionally Complementary Transform for both audio-text and image-text questions.
    }
    \label{fig:fes_fcs_examples}
\end{figure*}
\end{document}